\let\cl@chapter\relax \makeatother
\crefname{section}{Section}{Sections}
\crefname{subsection}{Section}{Sections}
\crefname{definition}{Definition}{Definitions}
\crefname{theorem}{Theorem}{Theorems}
\crefname{lemma}{Lemma}{Lemmata}
\crefname{propositions}{Proposition}{Propositions}
\crefname{corollary}{Corollary}{Corollaries}
\crefname{property}{Property}{Properties}
\crefname{conjecture}{Conjecture}{Conjectures}
\crefname{example}{Example}{Examples}
\crefname{problem}{Problem}{Problems}
\crefname{notation}{Notation}{Notation}
\crefname{algo}{Algorithm}{Algorithms}
\crefname{algorithm}{Algorithm}{Algorithms}
\crefname{figure}{Figure}{Figures}
\title{Structural Decompositions for Problems\\with Global Constraints}
\author{
Evgenij Thorstensen
}
\authorrunning{E. Thorstensen}
\institute{
E. Thorstensen \at Department of Informatics, University of Oslo, Norway\\
\email{evgenit@ifi.uio.no}
}
\date{}
\newcommand{\tw}{\ensuremath{\mathsf{tw}}}
\newcommand{\hw}{\ensuremath{\mathsf{ghw}}}
\newcommand{\fhw}{\ensuremath{\mathsf{fhw}}}
\newcommand{\subw}{\ensuremath{\mathsf{subw}}}
\newcommand{\vars}{\ensuremath{\mathcal{V}}}
\newcommand{\sol}{\ensuremath{\mathsf{sol}}}
\newcommand{\CSP}{\ensuremath{\textup{CSP}}}
\newcommand{\hyp}{\ensuremath{\mathsf{hyp}}}
\newcommand{\iv}{\ensuremath{\mathsf{iv}}}
\newcommand{\ic}{\ensuremath{\mathsf{ic}}}
\newcommand{\Pj}{\ensuremath{\mathsf{pj}}}
\newcommand{\Extlang}{\ensuremath{\mathbf{Ext}}}
\DeclarePairedDelimiter{\tup}{\langle}{\rangle}
\newcommand{\languageword}{catalogue}
\newcommand{\languagesymbol}{\ensuremath{\Gamma}}
\newcommand{\cost}{\ensuremath{\mathsf{cost}}}
\newcommand{\WCSP}{\ensuremath{\textup{WCSP}}}
\def\blfootnote{\xdef\@thefnmark{}\@footnotetext}
\begin{document}

\maketitle

\begin{abstract}
  A wide range of problems can be modelled as constraint satisfaction
  problems (CSPs), that is, a set of constraints that must be
  satisfied simultaneously. Constraints can either be represented
  extensionally, by explicitly listing allowed combinations of values,
  or implicitly, by special-purpose algorithms provided by a
  solver. 

  Such implicitly represented constraints, known as global
  constraints, are widely used; indeed, they are one of the key
  reasons for the success of constraint programming in solving
  real-world problems. In recent years, a variety of restrictions on
  the structure of CSP instances have been shown to yield tractable
  classes of CSPs. However, most such restrictions fail to guarantee
  tractability for CSPs with global constraints. We therefore study
  the applicability of structural restrictions to instances with such
  constraints.

  We show that when the number of solutions to a CSP instance is
  bounded in key parts of the problem, structural restrictions can be
  used to derive new tractable classes. Furthermore, we show that this
  result extends to combinations of instances drawn from known
  tractable classes, as well as to CSP instances where constraints
  assign costs to satisfying assignments.

  \keywords{Tractability \and Global constraints \and Structural restrictions}
\end{abstract}

\blfootnote{A preliminary version of this paper appeared in
  \textsl{Proceedings of the 19th International Conference on
    Principles and Practice of Constraint Programming (CP
    2013)}.}

\section{Introduction}

Constraint programming (CP) is widely used to solve a
variety of practical problems such as planning and scheduling
\cite{vanHoeve2006:global-const,Wallace96practicalapplications}, and
industrial configuration
\cite{pup-cpaior2011,bin-repack-datacenters}. Constraints can either
be represented explicitly, by a table of allowed assignments, or
implicitly, by specialized algorithms provided by the constraint
solver. These algorithms may take as a parameter a \emph{description}
that specifies exactly which kinds of assignments a particular
instance of a constraint should allow. Such implicitly represented
constraints are known as global constraints, and a lot of the success
of CP in practice has been attributed to solvers providing global
constraints \cite{Rossi06:handbook,Gent06:minion,wallace97-eclipse}.

The theoretical properties of constraint problems, in particular the
computational complexity of different types of problem, have been
extensively studied and quite a lot is known about what restrictions
on the general \emph{constraint satisfaction problem} are sufficient
to make it tractable
\cite{struct-decomp-stacs-et,bulatov05:classifying-constraints,CohenUnifTract,compar-decomp-csp,grohe-hom-csp-complexity,DBLP:journals/corr/abs-0911-0801}. In
particular, many structural restrictions, that is, restrictions on how
the constraints in a problem interact, have been identified and shown
to yield tractable classes of CSP instances
\cite{Gottlob02:jcss-hypertree,grohe-marx-frac-edge-cover,DBLP:journals/corr/abs-0911-0801}. However,
much of this theoretical work has focused on problems where each
constraint is explicitly represented, and most known structural
restrictions fail to yield tractable classes for problems with global
constraints. This is the case even when the global constraints are
fairly simple, such as overlapping difference constraints with acyclic
hypergraphs \cite{Kutz08:sim-match}.

Theoretical work on global constraints has to a large extent focused
on developing efficient algorithms to achieve various kinds of local
\emph{consistency} for individual constraints.  This is generally done
by pruning from the domains of variables those values that cannot lead
to a satisfying assignment
\cite{Walsh07:global,Samer11:constraints-tractable}.  Another strand
of research has explored conditions that allow global
constraints to be replaced by collections of explicitly represented constraints
\cite{bess-nvalue}. These techniques allow faster implementations of
algorithms for \emph{individual constraints}, but do not shed much
light on the complexity of problems with multiple \emph{overlapping}
global constraints, which is something that practical problems
frequently require.

As such, in this paper we investigate the properties of explicitly
represented constraints that allow structural restrictions to
guarantee tractability. Identifying such properties will allow us to
find global constraints that also possess them, and lift structural
restrictions to instances with such constraints.

As discussed in~\cite{ChenGroheCompactRel}, when the constraints in a
family of problems have unbounded arity, the way that the constraints
are {\em represented} can significantly affect the complexity.
Previous work in this area has assumed that the global constraints
have specific representations, such as
propagators~\cite{Green08:cp-structural}, negative
constraints~\cite{gdnf-cohen-repr}, or GDNF/decision
diagrams~\cite{ChenGroheCompactRel}, and exploited properties
particular to that representation. In contrast, we will use a
definition of global constraints, used also in \cite{cp-tract-comb},
that allows us to discuss different representations in a uniform
manner. Armed with this definition, we obtain results that rely on a
relationship between the size of a global constraint and the number of
its satisfying assignments.

Furthermore, as our definition is general enough to capture arbitrary
problems in \NP, we demonstrate how our results can be used to
decompose a constraint problem into smaller constraint problems (as
opposed to individual constraints), and when such decompositions lead
to tractability. The results that we obtain on this topic extend
previous research by Cohen and Green \cite{guarded-decomp-CG}. In
addition to being more general, our results arguably use simpler
theoretical machinery.

Finally, we show how our results can be extended to \emph{weighted
  CSP} \cite{gottlob-etal-tractable-optimization,degivry06}, that is,
CSP where constraints assign costs to satisfying assignments, and the
goal is to find an optimal solution.

\section{Preliminaries}

In this section, we define the basic concepts that we will use
throughout the paper. In particular, we give a precise definition of
global constraints and of structural decompositions.

\subsection{Global Constraints}

\begin{definition}[Variables and assignments]
  Let $V$ be a set of variables, each with an associated finite set of
  domain elements. We denote the set of domain elements (the domain)
  of a variable $v$ by $D(v)$.  We extend this notation to arbitrary
  subsets of variables, $W$, by setting $D(W) =
  \displaystyle\bigcup_{v \in W} D(v)$.

  An {\em assignment} of a set of variables $V$ is a function $\theta
  : V \rightarrow D(V)$ that maps every $v \in V$ to an element
  $\theta(v) \in D(v)$. We write $\vars(\theta)$ for the set of
  variables $V$.

  We denote the restriction of $\theta$ to a set of variables $W
  \subseteq V$ by $\theta|_W$. We also allow the special assignment
  $\bot$ of the empty set of variables. In particular, for every
  assignment $\theta$, we have $\theta|_\emptyset = \bot$.
\end{definition}

\begin{definition}[Projection]
  Let $\Theta$ be a set of assignments of a set of variables $V$. The
  \emph{projection} of $\Theta$ onto a set of variables $X \subseteq
  V$ is the set of assignments $\pi_X(\Theta) = \{\theta|_X \mid
  \theta \in \Theta\}$.
\end{definition}
   
Note that when $\Theta = \emptyset$ we have $\pi_X(\Theta) =
\emptyset$, but when $X = \emptyset$ and $\Theta \neq \emptyset$, we
have $\pi_X(\Theta) = \{\bot\}$.

\begin{definition}[Disjoint union of assignments]
 \label{def:disjoint-union}
 Let $\theta_1$ and $\theta_2$ be two assignments of disjoint sets of
 variables $V_1$ and $V_2$, respectively. The \emph{disjoint union} of
 $\theta_1$ and $\theta_2$, denoted $\theta_1 \oplus \theta_2$, is the
 assignment of $V_1 \cup V_2$ such that $(\theta_1 \oplus \theta_2)(v)
 = \theta_1(v)$ for all $v \in V_1$, and $(\theta_1 \oplus
 \theta_2)(v) = \theta_2(v)$ for all $v \in V_2$.
\end{definition}

Global constraints have traditionally been defined, somewhat vaguely,
as constraints without a fixed arity, possibly also with a compact
representation of the constraint relation. For example, in
\cite{vanHoeve2006:global-const} a global constraint is defined as ``a
constraint that captures a relation between a non-fixed number of
variables''.

Below, we offer a precise definition similar to the one in
\cite{Walsh07:global}, where the authors define global constraints for
a domain $D$ over a list of variables $\sigma$ as being given
intensionally by a function $D^{|\sigma|} \rightarrow \{0, 1\}$
computable in polynomial time. Our definition differs from this one in
that we separate the general {\em algorithm} of a global constraint
(which we call its {\em type}) from the specific description.  This
separation allows us a better way of measuring the size of a global
constraint, which in turn helps us to establish new complexity
results.

\begin{definition}[Global constraints]
  \label{def:glob-const}
  A \emph{global constraint type} is a parameterized polynomial-time
  algorithm that determines the acceptability of an assignment of a
  given set of variables.

  Each global constraint type, $e$, has an associated set of
  \emph{descriptions}, $\Delta(e)$. Each description $\delta \in
  \Delta(e)$ specifies appropriate parameter values for the algorithm
  $e$.  In particular, each $\delta \in \Delta(e)$ specifies a set of
  variables, denoted by $\vars(\delta)$. We write $|\delta|$ for the
  number of bits used to represent $\delta$.

  A \emph{global constraint} $e[\delta]$, where $\delta \in
  \Delta(e)$, is a function that maps assignments of $\vars(\delta)$
  to the set $\{0,1\}$.  Each assignment that is allowed by
  $e[\delta]$ is mapped to 1, and each disallowed assignment is mapped
  to 0.  The \emph{extension} or \emph{constraint relation} of
  $e[\delta]$ is the set of assignments, $\theta$, of $\vars(\delta)$
  such that $e[\delta](\theta) = 1$. We also say that such assignments
  \emph{satisfy} the constraint, while all other assignments
  \emph{falsify} it.
\end{definition}

When we are only interested in describing the set of assignments that
satisfy a constraint, and not in the complexity of determining
membership in this set, we will sometimes abuse notation by writing
$\theta \in e[\delta]$ to mean $e[\delta](\theta) = 1$.

As can be seen from the definition above, a global constraint is not
usually explicitly represented by listing all the assignments that
satisfy it. Instead, it is represented by some description $\delta$
and some algorithm $e$ that allows us to check whether the constraint
relation of $e[\delta]$ includes a given assignment. To stay within
the complexity class \NP, this algorithm is required to run in
polynomial time. As the algorithms for many kinds of global
constraints are built into modern constraint solvers, we measure the
{\em size} of a global constraint's representation by the size of its
description.

\begin{example}[EGC]
  \label{example:egc}
  A very general global constraint type is the \emph{extended global
    cardinality} constraint type~\cite{Samer11:constraints-tractable}.
  This form of global constraint is defined by specifying, for every
  domain element $a$, a finite set of natural numbers $K(a)$, called
  the cardinality set of $a$. The constraint requires that the number
  of variables which are assigned the value $a$ is in the set $K(a)$,
  for each possible domain element $a$.

  Using our notation, the description $\delta$ of an EGC global
  constraint specifies a function $K_\delta : D(\vars(\delta))
  \rightarrow \mathcal P(\mathbb N)$ that maps each domain element to
  a set of natural numbers.  The algorithm for the EGC constraint then
  maps an assignment $\theta$ to $1$ if and only if, for every domain
  element $a \in D(\vars(\delta))$, we have that $|\{v \in
  \vars(\delta) \mid \theta(v) = a\}| \in K_\delta(a)$.

\end{example}

  
  

\begin{example}[Table and negative constraints]
  \label{example:table-const}
  A rather degenerate example of a a global constraint type is 
  the \emph{table} constraint.
  
  In this case the description $\delta$ is simply a list of assignments 
  of some fixed set of variables, $\vars(\delta)$. The algorithm for
  a table constraint then decides, for any
  assignment of $\vars(\delta)$, whether it is included in $\delta$. 
  This can be done in a time which is linear in the size of $\delta$ 
  and so meets the polynomial time requirement. 

  {\em Negative} constraints are complementary to table constraints,
  in that they are described by listing {\em forbidden}
  assignments. The algorithm for a negative constraint $e[\delta]$
  decides, for any assignment of $\vars(\delta)$, whether it is {\em
    not} included in $\delta$. Observe that disjunctive clauses, used
  to define propositional satisfiability problems, are a special case
  of the negative constraint type, as they have exactly one forbidden
  assignment.
  
  We observe that any global constraint can be rewritten as a table or
  negative constraint. However, this rewriting will, in general, incur
  an exponential increase in the size of the description.
\end{example}

As can be seen from the definition above, a table global constraint is
explicitly represented, and thus equivalent to the usual notion of an
extensionally represented constraint.

In some cases, particularly for table constraints, we will make use of
the standard notion of a relational join, which we define below.

\begin{definition}[Constraint join]
  A global constraint $e_j[\delta_j]$ is the join of two global
  constraints $e_1[\delta_1]$ and $e_2[\delta_2]$ whenever
  $\vars(\delta_j) = \vars(\delta_1) \cup \vars(\delta_2)$, and
  $\theta \in e_j[\delta_j]$ if and only if $\theta|_{\vars(\delta_1)}
  \in e_1[\delta_1]$ and $\theta|_{\vars(\delta_2)} \in
  e_2[\delta_2]$.
\end{definition}

\begin{definition}[CSP instance]
  An instance of the constraint satisfaction problem (CSP) is a pair
  $\tup{V, C}$ where $V$ is a finite set of \emph{variables}, and $C$
  is a set of \emph{global constraints} such that $V =
  \displaystyle\bigcup_{e[\delta] \in C} \vars(\delta)$. In a CSP
  instance, we call $\vars(\delta)$ the \emph{scope} of the constraint
  $e[\delta]$.

  A \emph{classic} CSP instance is one where every constraint is a
  table constraint.

  A \emph{solution} to a CSP instance $P = \tup{V, C}$ is an
  assignment $\theta$ of $V$ which satisfies every global constraint,
  i.e., for every $e[\delta] \in C$ we have $\theta|_{\vars(\delta)}
  \in e[\delta]$. We denote the set of solutions to $P$ by $\sol(P)$.

  The \emph{size} of a CSP instance $P = \tup{V, C}$ is $|P| = |V| +
  \displaystyle\sum_{v \in V} |D(v)| + \displaystyle\sum_{e[\delta]
    \in C} |\delta|$.
\end{definition}

Note that this definition disallows CSP instances with variables that
are not in the scope of any constraint. Since a variable that is not
in the scope of any constraint can be assigned any value from its
domain, excluding such variables can be done without loss of
generality. While this condition is strictly speaking not necessary,
it will allow us to simplify some proofs later on. In particular, it
entails that the set of solutions to a CSP instance is precisely the
set of assignments satisfying the constraint obtained by taking the
join of every constraint in the CSP instance.

To illustrate these definitions, consider the connected graph
partition problem (CGP) \cite[p.~209]{Garey79:intractability},
formally defined below. Informally, the CGP is the problem of
partitioning the vertices of a graph into bags of a given size while
minimizing the number of edges that have endpoints in different bags.

\begin{problem}[Connected graph partition (CGP)]
  \label{prob:cgp}
  We are given an undirected and connected graph $\tup{V, E}$, as well
  as $\alpha, \beta \in \mathbb N$. Can $V$ be partitioned into
  disjoint sets $V_1,\ldots,V_m$, for some $m$, with $|V_i| \leq
  \alpha$ such that the set of broken edges $E' = \{\{u, v\} \in E
  \mid u \in V_i, v \in V_j, i\not=j\}$ has cardinality $\beta$ or
  less?
\end{problem} 

\begin{example}[The CGP encoded with global constraints]
  \label{example:cgp-as-csp}
  Given a connected graph $G = \tup{V, E}$, $\alpha$, and $\beta$, we
  build a CSP instance $\tup{A \cup B, C}$ as follows. The set $A$
  will have a variable $v$ for every $v \in V$ with domain $D(v) =
  \{1,\ldots,|V|\}$, while the set $B$ will have a boolean variable
  $e$ for every edge in $E$.

  The set of constraints $C$ will have an EGC constraint $C^{\alpha}$
  on $A$ with $K(i) = \{0,\ldots,\alpha\}$ for every $1 \leq i \leq
  |V|$. Likewise, $C$ will have an EGC constraint $C^\beta$ on $B$
  with $K(0) = \{0,\ldots,|E|\}$ and $K(1) = \{0,\ldots,\beta\}$.

  Finally, to connect $A$ and $B$, the set $C$ will have for every
  edge $\{u, v\} \in E$, with corresponding variable $e \in B$, a
  table constraint on $\{u, v, e\}$ requiring $\theta(u) \not=
  \theta(v) \rightarrow \theta(e)=1$.

  As an example, \cref{fig:cgp-ext-example} shows this encoding for
  the CGP on the graph $C_5$, that is, a simple cycle on five
  vertices.
\end{example}

This encoding follows the definition of \cref{prob:cgp} quite closely,
and can be done in polynomial time.

\begin{figure}[h]
  \centering
  \includegraphics[width=0.9\textwidth]{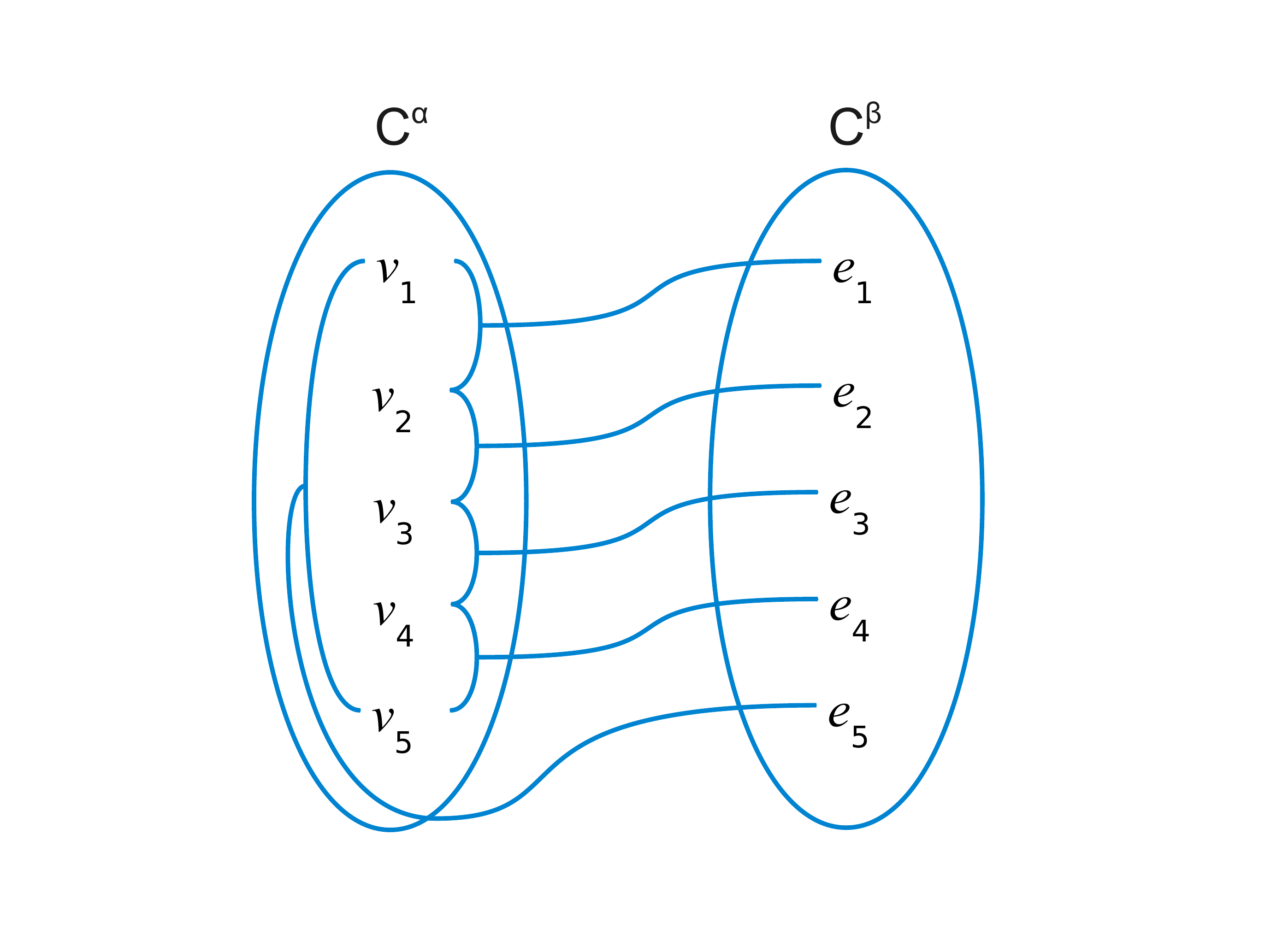}
  \caption{CSP encoding of the CGP on the graph $C_5$.}
  \label{fig:cgp-ext-example}
\end{figure}

\subsection{Structural Restrictions}
\label{sect:structural-restrictions}

In recent years, there has been a flurry of research into identifying
tractable classes of classic CSP instances based on structural
restrictions, that is, restrictions on the hypergraphs of CSP
instances. Below, we present and discuss a few representative
examples. In \cref{sect:prop-ext,sect:subproblem-decompositions}, we
will show how these techniques can be applied to CSP instance with
global constraints. To present the various structural restrictions, we
will use the framework of width functions, introduced by Adler
\cite{adler-thesis}.

\begin{definition}[Hypergraph]
  A hypergraph $\tup{V, H}$ is a set of vertices $V$ together with a
  set of hyperedges $H \subseteq \mathcal P(V)$.

  Given a CSP instance $P = \tup{V, C}$, the hypergraph of $P$,
  denoted $\hyp(P)$, has vertex set $V$ together with a hyperedge
  $\vars(\delta)$ for every $e[\delta] \in C$.
\end{definition}

\begin{definition}[Tree decomposition]
  A \emph{tree decomposition} of a hypergraph $\tup{V, H}$ is a pair
  $\langle T, \chi \rangle$ where $T$ is a tree and $\chi$ is a
  labelling function from nodes of $T$ to subsets of $V$, such that
  \begin{enumerate}
  \item for every $v \in V$, there exists a node $t$ of $T$ such that
    $v \in \chi(t)$,
  \item for every hyperedge $h \in H$, there exists a node $t$ of $T$
    such that $h \subseteq \chi(t)$, and
  \item for every $v \in V$, the set of nodes $\{ t \mid v \in
    \chi(t) \}$ induces a connected subtree of $T$.
  \end{enumerate}
\end{definition}

\begin{definition}[Width function]
  Let $G = \tup{V, H}$ be a hypergraph. A \emph{width function} on $G$
  is a function $f : \mathcal P(V) - \{\emptyset\} \rightarrow \mathbb
  R^+$ that assigns a positive real number to every nonempty subset of
  vertices of $G$. A width function $f$ is monotone if $f(X) \leq
  f(Y)$ whenever $X \subseteq Y$.

  Let $\tup{T, \chi}$ be a tree decomposition of $G$, and $f$ a
  width function on $G$. The \emph{$f$-width} of $\tup{T, \chi}$ is
  $\max(\{f(\chi(t)) \mid t \mbox{ node of } T\})$. The
  \emph{$f$-width} of $G$ is the minimal $f$-width over all its tree
  decompositions.
\end{definition}

In other words, a width function on a hypergraph $G$ tells us how to
assign weights to nodes of tree decompositions of $G$.

\begin{definition}[Treewidth]
  Let $f(X) = |X|-1$. The treewidth $\tw(G)$ of a hypergraph $G$ is
  the $f$-width of $G$.
\end{definition}

Let $G = \tup{V, H}$ be a hypergraph, and $X \subseteq V$. An edge
cover of $X$ is any set of hyperedges $H' \subseteq H$ that satisfies
$X \subseteq \bigcup H'$. The edge cover number $\rho(X)$ of $X$ is
the size of the smallest edge cover of $X$. It is clear that $\rho$ is
a width function.

\begin{definition}[\protect{\cite[Chapter 2]{adler-thesis}}]
  The generalized hypertree width $\hw(G)$ of a hypergraph $G$ is the
  $\rho$-width of $G$.
\end{definition}

Next, we define a relaxation of hypertree width known as fractional
hypertree width, introduced by Grohe and Marx
\cite{grohe-marx-frac-edge-cover}.

\begin{definition}[Fractional edge cover]
  Let $G = \tup{V, H}$ be a hypergraph, and $X \subseteq V$. A
  \emph{fractional edge cover} for $X$ is a function $\gamma : H
  \rightarrow [0, 1]$ such that $\displaystyle\sum_{v \in h \in H}
  \gamma(h) \geq 1$ for every $v \in X$. We call $\displaystyle\sum_{h
    \in H} \gamma(h)$ the weight of $\gamma$. The \emph{fractional
    edge cover number} $\rho^*(X)$ of $X$ is the minimum weight over
  all fractional edge covers for $X$. It is known that this minimum is
  always rational \cite{grohe-marx-frac-edge-cover}. We furthermore
  define $\rho^*(G) = \rho^*(V)$.
\end{definition}

\begin{definition}
  The \emph{fractional hypertree width} $\fhw(G)$ of a hypergraph $G$
  is the $\rho^*$-width of $G$.
\end{definition}

For a class of hypergraphs $\mathcal H$ and a notion of width
$\alpha$, we write $\alpha(\mathcal H)$ for the maximal $\alpha$-width
over the hypergraphs in $\mathcal H$. If this is unbounded we write
$\alpha(\mathcal H) = \infty$; otherwise $\alpha(\mathcal H) <
\infty$.

Bounding any of the above width measures by a constant can be used to
guarantee tractability for classes of CSP instances where all
constraints are table constraints.

\begin{theorem}[\cite{Dalmau02:csp-tractability-cores,Gottlob02:jcss-hypertree,adler-gottlob-grohe-ghw,grohe-marx-frac-edge-cover,Marx2010-approx-frac-htw}]
  \label{thm:width-tract}
  Let $\mathcal H$ be a class of hypergraphs. For every $\alpha \in
  \{\tw, \hw,\fhw\}$, any class of classic CSP instances whose
  hypergraphs are in $\mathcal H$ is tractable if $\alpha(\mathcal H)
  < \infty$.
\end{theorem}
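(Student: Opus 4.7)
The plan is a uniform dynamic-programming argument on a tree decomposition of bounded $\alpha$-width. Given a classic CSP instance $P = \tup{V, C}$ with $\hyp(P) \in \mathcal H$, I would first obtain a tree decomposition $\tup{T, \chi}$ of $\hyp(P)$ whose $\alpha$-width is bounded by a constant depending only on $\alpha(\mathcal H)$. After rooting $T$, process its nodes bottom-up, computing at each $t$ the set $S_t$ of assignments of $\chi(t)$ that satisfy every constraint whose scope lies in $\chi(t)$ and can be extended to a solution of the sub-instance induced by the bags below $t$. At a leaf, $S_t$ is simply the set of assignments of $\chi(t)$ satisfying each constraint with scope inside $\chi(t)$. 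At an internal node with children $t_1, \ldots, t_m$, I keep those local assignments which, when restricted to $\chi(t) \cap \chi(t_i)$, agree with some element of $S_{t_i}$ for each $i$; the connectivity condition on tree decompositions ensures this is correct. The instance is satisfiable iff $S_{\text{root}} \neq \emptyset$, and an actual solution can be recovered by a top-down pass.

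The crucial step is bounding $|S_t|$ by a polynomial in $|P|$, and for this each width measure supplies its own tool. For $\alpha = \tw$, bag sizes are at most $\tw(\mathcal H)+1$, so $|S_t|$ is trivially at most $(\max_v |D(v)|)^{\tw(\mathcal H)+1}$. For $\alpha = \ghw$, each bag $\chi(t)$ is covered by at most $\ghw(\mathcal H)$ hyperedges corresponding to table constraints of size at most $|P|$; the join of these constraints, projected onto $\chi(t)$, has at most $|P|^{\ghw(\mathcal H)}$ tuples and contains $S_t$. For $\alpha = \fhw$, the central ingredient is the Atserias--Grohe--Marx inequality: for any $X$ with fractional edge cover number $r$, the projection onto $X$ of the join of a covering family $\{R_e\}$ with weights $\gamma(e)$ has size at most $\prod_e |R_e|^{\gamma(e)} \leq |P|^r$. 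Applied to each bag, this gives $|S_t| \leq |P|^{\fhw(\mathcal H)}$.

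The main technical obstacle is producing the tree decomposition in polynomial time, and, in the fractional case, computing the bag relation within the AGM size bound. For $\tw$, Bodlaender's FPT algorithm gives an optimal decomposition. For $\ghw$, exact minimization is NP-hard, but when $\ghw(\mathcal H)$ is bounded a polynomial-time algorithm yields a decomposition whose width is bounded in terms of $\ghw(\mathcal H)$. For $\fhw$, Marx's polynomial-time algorithm produces a tree decomposition of width polynomial in $\fhw(\mathcal H)$, which is enough for our purposes; computing each bag's projection within the AGM bound additionally needs worst-case optimal join algorithms, since naive pairwise joins can blow up past the bound. With these ingredients in place, the bottom-up DP runs in polynomial time in all three cases, establishing the claimed tractability.
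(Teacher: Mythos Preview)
The paper does not give its own proof of this theorem; it is stated as a known result from the literature, with citations to the original sources (Dalmau et al., Gottlob et al., Adler--Gottlob--Grohe, Grohe--Marx, Marx). There is therefore no in-paper argument to compare your proposal against.

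That said, your sketch is essentially the standard argument underlying those cited results: bottom-up dynamic programming over a tree decomposition, with the bag-relation size bounded by the relevant width measure (trivial for $\tw$, join of $k$ edge relations for $\ghw$, the AGM inequality for $\fhw$), together with the known decomposition-finding algorithms (Bodlaender for $\tw$, the Adler--Gottlob--Grohe bound relating $\ghw$ to hypertree width for $\ghw$, and Marx's approximation for $\fhw$). One small caveat: for $\ghw$ you say a polynomial-time algorithm yields a decomposition of width ``bounded in terms of $\ghw(\mathcal H)$''; strictly, what is used is that hypertree width (which \emph{is} polynomially checkable for fixed $k$) is at most $3\cdot\ghw+1$, so bounded $\ghw$ implies bounded hypertree width and one works with the latter. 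With that adjustment your outline is correct and matches the approach in the cited references.
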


To go beyond fractional hypertree width, Marx
\cite{DBLP:journals/corr/abs-0911-0801}
recently introduced the concept of submodular width. This concept uses
a set of width functions satisfying a condition (submodularity), and
considers the $f$-width of a hypergraph for every such function $f$.

\begin{definition}[Submodular width function]
  Let $G = \tup{V, H}$ be a hypergraph. A width function $f$ on $G$ is
  \emph{edge-dominated} if $f(h) \leq 1$ for every $h \in H$.

  An edge-dominated width function $f$ on $G$ is \emph{submodular} if
  for every pair of sets $X, Y \subseteq V$, we have $f(X) + f(Y) \geq
  f(X \cap Y) + f(X \cup Y)$.
\end{definition}

\begin{definition}[Submodular width]
  Let $G$ be a hypergraph. The \emph{submodular width} $\subw(G)$ of
  $G$ is the supremum of the $f$-widths of $G$ taken over all
  monotone, edge-dominated, submodular width functions $f$ on $G$.

  For a class of hypergraphs $\mathcal H$, we write $\subw(\mathcal
  H)$ for the maximal submodular width over the hypergraphs in
  $\mathcal H$. If this is unbounded we write $\subw(\mathcal H) =
  \infty$; otherwise $\subw(\mathcal H) < \infty$.
\end{definition}

Unlike for fractional hypertree width and every other structural
restriction discussed so far, the running time of the algorithm given
by Marx for classic CSP instances with bounded submodular width has an
exponential dependence on the number of vertices in the hypergraph of
the instance. The class of classic CSP instances with bounded
submodular width is therefore not known to be tractable. However, this
class is what is called fixed-parameter tractable
\cite{downey2012parameterized,flum2006parameterized}.



\begin{definition}[Fixed-parameter tractable]
  A \emph{parameterized problem instance} is a pair $\tup{k, P}$,
  where $P$ is a problem instance, such as a CSP instance, and $k \in
  \mathbb N$ a parameter.

  Let $S$ be a class of parameterized problem instances. We say that
  $S$ is \emph{fixed-parameter tractable} (in \FPT) if there is a
  computable function $f$ of one argument, as well as a constant $c$,
  such that every problem $\tup{k, P} \in S$ can be solved in time
  $O(f(k) \times |P|^c)$.
\end{definition}

The function $f$ can be arbitrary, but must only depend on the
parameter $k$. For CSP instances, one possible parameterization is by
the size of the hypergraph of an instance, measured by the number of
vertices. Since the hypergraph of an instance has a vertex for every
variable, for every CSP instance $P = \tup{V, C}$ we consider the
parameterized instance $\tup{|V|, P}$.

\begin{theorem}[\cite{DBLP:journals/corr/abs-0911-0801}]
  \label{cor:subw}
  Let $\mathcal H$ be a class of hypergraphs. If $\subw(\mathcal H) <
  \infty$, then a class of classic CSP instances whose hypergraphs are
  in $\mathcal H$ is in \FPT{}.
\end{theorem}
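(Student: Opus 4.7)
The plan is to follow Marx's approach, which adapts the techniques used for (fractional) hypertree width but with an instance-dependent choice of width function. Given a classic CSP instance $P = \tup{V, C}$ whose hypergraph $H = \hyp(P)$ lies in $\mathcal H$, we want to solve $P$ in time $g(|V|) \cdot |P|^{O(1)}$ for some computable $g$, using the bound $w := \subw(\mathcal H) < \infty$.

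First I would construct, from the instance itself, a monotone, edge-dominated, submodular width function $f_P$ on $H$. The natural candidate is essentially an entropy/projection-size function: for $X \subseteq V$, set $f_P(X) = \log_N |\pi_X(R_X)|$, where $R_X$ is a suitable relation obtained from the table constraints whose scopes cover $X$, and $N$ is a uniform normalization (e.g., $|P|$). Submodularity would be established via Shearer-type inequalities for projection sizes (or, equivalently, via submodularity of entropy under the uniform distribution on a relation); edge-dominance follows from $|\pi_h(e[\delta_h])| \leq |\delta_h| \leq |P|$ for every hyperedge $h$ corresponding to $e[\delta_h] \in C$. Because $f_P$ is among the functions over which $\subw$ takes a supremum, there exists a tree decomposition $\tup{T,\chi}$ of $H$ whose $f_P$-width is at most $w$. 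For this decomposition every bag satisfies $|\pi_{\chi(t)}(R_{\chi(t)})| \leq N^{w} = |P|^{w}$.

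Given such a decomposition, standard bottom-up dynamic programming along $T$ produces all satisfying projections on each bag in time $|P|^{O(w)}$, yielding a solution (or deciding unsatisfiability) in polynomial time in $|P|$ with exponent depending only on $w$. Since $w$ is determined by $\mathcal H$, this step is polynomial once the right decomposition is in hand.

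The main obstacle is that $f_P$ is a property of the instance, not of $H$ alone, so we do not know in advance which tree decomposition of $H$ realises width at most $w$; moreover, the projection sizes defining $f_P$ are themselves hard to compute. Marx's resolution, which I would follow, is a \emph{uniformization/splitting} step: one preprocesses $P$ by partitioning it into a bounded number of subinstances with the same hypergraph, each of which is ``uniform'' in the sense that its projection sizes on every vertex subset are controlled by explicit exponents. The number of pieces in this partition depends only on $|V|$ (since the exponents live in a discretized grid indexed by subsets of $V$), contributing the FPT factor $g(|V|)$. On each uniform piece one can compute $f_P$, search for a width-$w$ decomposition among the $2^{O(|V|^2)}$ candidate tree decompositions of $H$, and then run the dynamic programming described above. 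Combining the $g(|V|)$ subinstances with the polynomial per-piece solve gives the desired $g(|V|) \cdot |P|^{O(1)}$ bound, placing the class in \FPT.
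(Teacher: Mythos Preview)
The paper does not prove this theorem; it is stated with a citation to Marx and used as a black box. There is therefore no ``paper's own proof'' to compare against --- your sketch already goes well beyond what the paper provides.

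That said, your sketch contains a genuine gap. You assert that $f_P(X) = \log_N |\pi_X(R_X)|$ is submodular ``via Shearer-type inequalities'' or ``submodularity of entropy under the uniform distribution on a relation''. Neither justification works. Log of projection size is \emph{not} submodular in general: take $R = \{(0,0,0)\} \cup \{(i,1,j) : 0 \le i,j \le 2\}$ on variables $\{a,b,c\}$, with $X = \{a,b\}$ and $Y = \{b,c\}$; then $|\pi_X(R)| = |\pi_Y(R)| = 4$, $|\pi_{X\cap Y}(R)| = 2$, $|\pi_{X\cup Y}(R)| = 10$, and $4\cdot 4 < 10\cdot 2$. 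Shannon entropy of the uniform distribution on $R$ \emph{is} submodular, but marginal entropy can be far smaller than log of support size, so bounding the entropy of each bag does not bound $|\pi_{\chi(t)}(R)|$, and your dynamic-programming step no longer runs in $|P|^{O(w)}$.

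This is not a technicality; it is precisely the obstacle that makes Marx's result nontrivial. In his actual argument the uniformization/splitting step is not merely a device for \emph{computing} $f_P$ or for \emph{searching} over tree decompositions, as you present it in your final paragraph. Its real purpose is to \emph{produce} subinstances on which the relevant counting function becomes (approximately) monotone and submodular in the first place; only on those uniform pieces does bounded submodular width translate into a tree decomposition whose bags have polynomially bounded projection counts. Your outline has the right ingredients but inverts their roles: the first paragraph claims a submodularity that fails outright, and the third paragraph undersells what uniformization is actually buying.
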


The three structural restrictions that we have just presented form a
hierarchy
\cite{grohe-marx-frac-edge-cover,DBLP:journals/corr/abs-0911-0801}:
For every hypergraph $G$, $\subw(G) \leq \fhw(G) \leq \hw(G) \leq
\tw(G)$.

As the example below demonstrates, \cref{thm:width-tract} does not
hold for CSP instances with arbitrary global constraints, even if we
have a fixed, finite domain. The only exception is the restriction of
\cref{thm:width-tract} to treewidth, as bounded treewidth implies
bounded arity for every hyperedge.

\begin{example}
  \label{example:3col}
  The \NP-complete problem of 3-colourability
  \cite{Garey79:intractability} is to decide, given a graph
  $\tup{V,E}$, whether the vertices $V$ can be coloured with three
  colours such that no two adjacent vertices have the same colour.

  We may reduce this problem to a CSP with EGC constraints
  (cf.~Example~\ref{example:egc}) as follows: Let $V$ be the set of
  variables for our CSP instance, each with domain $\{r,g,b\}$. For
  every edge $\tup{v, w} \in E$, we post an EGC constraint with scope
  $\{v, w\}$, parameterized by the function $K$ such that $K(r) = K(g)
  = K(b) = \{0,1\}$. Finally, we make the hypergraph of this CSP
  instance have low width by adding an EGC constraint with scope $V$
  parameterized by the function $K'$ such that $K'(r) = K'(g) = K'(b)
  = \{0,\ldots,|V|\}$. This reduction clearly takes polynomial time,
  and the hypergraph $G$ of the resulting instance has
  $\hw(G)=\fhw(G)=\subw(G)=1$.

  As the constraint with scope $V$ allows
  all possible assignments, any solution to this CSP is also a
  solution to the 3-colourability problem, and vice versa.

\end{example}

Likewise, \cref{cor:subw} does not hold for CSP instances with
arbitrary global constraints if we allow the variables unbounded
domain size, that is, change the above example to allow each variable
its own set of colours. In other words, the structural restrictions
cannot yield tractable classes of CSP instances with arbitrary global
constraints. With that in mind, in the rest of the paper we will
identify properties of extensionally represented constraints that
these structural restrictions exploit to guarantee tractability. Then,
we are going to look for restricted classes of global constraints that
possess these properties. To do so, we will use the following
definitions.

\begin{definition}[Constraint \languageword]
  A \emph{constraint \languageword} is a set of global constraints. A
  CSP instance $\tup{V, C}$ is said to be over a constraint
  \languageword\ $\languagesymbol$ if for every $e[\delta] \in C$ we
  have $e[\delta] \in \languagesymbol$.
\end{definition}

\begin{definition}[Restricted CSP class]
\label{def:CSPrestricted}
Let $\languagesymbol$ be a constraint \languageword, and let $\mathcal
H$ be a class of hypergraphs.  We define $\CSP(\mathcal
H,\languagesymbol)$ to be the class of CSP instances over
$\languagesymbol$ whose hypergraphs are in $\mathcal H$.

\end{definition}

\Cref{def:CSPrestricted} allows us to discuss classic CSP instances
alongside instances with global constraints. Let $\Extlang$ be the
constraint \languageword{} containing all table global
constraints. The classic CSP instances are then precisely those that
are over $\Extlang$. In particular, we can now restate
\cref{thm:width-tract,cor:subw} as follows.

\begin{theorem}
  \label{thm:width-new-notation}
  Let $\mathcal H$ be a class of hypergraphs. For every $\alpha \in
  \{\tw, \hw,\fhw\}$, the class of CSP instances $\CSP(\mathcal H,
  \Extlang)$ is tractable if $\alpha(\mathcal H) <
  \infty$. Furthermore, if $\subw(\mathcal H) < \infty$ then
  $\CSP(\mathcal H, \Extlang)$ is in \FPT.
\end{theorem}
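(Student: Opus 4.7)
The plan is to observe that \cref{thm:width-new-notation} is purely a translation of the earlier \cref{thm:width-tract,cor:subw} into the language of the newly introduced notation $\CSP(\mathcal H, \languagesymbol)$, so the proof reduces to unpacking definitions and invoking those theorems.

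First, I would expand the two ingredients of $\CSP(\mathcal H, \Extlang)$ from \cref{def:CSPrestricted}: an instance $P = \tup{V,C}$ belongs to this class precisely when (i) $\hyp(P) \in \mathcal H$ and (ii) every constraint $e[\delta] \in C$ lies in $\Extlang$, i.e., is a table constraint in the sense of \cref{example:table-const}. By the definition of a classic CSP instance given immediately after \cref{def:glob-const}, condition (ii) says exactly that $P$ is a classic CSP instance. Hence $\CSP(\mathcal H, \Extlang)$ is, as sets, the class of classic CSP instances whose hypergraphs belong to $\mathcal H$.

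Second, I would dispatch each half of the theorem. For the tractability claim, I would fix $\alpha \in \{\tw, \hw, \fhw\}$ and assume $\alpha(\mathcal H) < \infty$; the observation above puts us in the hypothesis of \cref{thm:width-tract}, which immediately gives tractability of $\CSP(\mathcal H, \Extlang)$. For the fixed-parameter tractability claim, I would assume $\subw(\mathcal H) < \infty$ and appeal in the same way to \cref{cor:subw}, where the parameter is the number of vertices in the hypergraph of the instance, i.e.\ the number of variables.

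There is no real obstacle here: the content of the theorem lies in \cref{thm:width-tract,cor:subw}, and the only work is verifying that ``classic CSP instance'' and ``CSP instance over $\Extlang$'' denote the same objects, which follows directly from \cref{example:table-const} and \cref{def:CSPrestricted}. The proof therefore consists of one short definitional remark followed by two citations.
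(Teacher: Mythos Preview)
Your proposal is correct and matches the paper's own treatment: the paper presents \cref{thm:width-new-notation} explicitly as a restatement of \cref{thm:width-tract,cor:subw} in the new notation and gives no separate proof. Your observation that $\CSP(\mathcal H,\Extlang)$ coincides with the class of classic CSP instances with hypergraphs in $\mathcal H$ is exactly the definitional link the paper relies on.
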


\section{Properties of Extensional Representation}
\label{sect:prop-ext}

We are going to start our investigation by considering fractional
hypertree width in more detail. To obtain tractability for classic CSP
instances of bounded fractional hypertree width, Grohe and Marx
\cite{grohe-marx-frac-edge-cover} use a bound on the number of
solutions to a classic CSP instance, and show that this bound is
preserved when we consider parts of a CSP instance. The following
definition formalizes what we mean by ``parts'', and is required to
state the algorithm that Grohe and Marx use in their paper.

\begin{definition}[Constraint projection]
  \label{def:constraint-projection}
  Let $e[\delta]$ be a global constraint. The \emph{projection of
    $e[\delta]$} onto a set of variables $X \subseteq \vars(\delta)$
  is the constraint $\Pj_X(e[\delta])$ such that $\mu \in
  \Pj_X(e[\delta])$ if and only if there exists $\theta \in e[\delta]$
  with $\theta|_X = \mu$.

  For a CSP instance $P = \tup{V, C}$ and $X \subseteq V$ we define
  $\Pj_X(P) = \tup{X, C'}$, where $C'$ is the set containing for every
  $e[\delta] \in C$ such that $X \cap \vars(\delta) \not= \emptyset$
  the constraint $\Pj_{X \cap \vars(\delta)}(e[\delta])$.
\end{definition}

\subsection{Algorithm for Enumerating All Solutions}

The algorithm is given as \cref{alg:enum-solutions}, and is
essentially the usual recursive search algorithm for finding all
solutions to a CSP instance by considering smaller and smaller
sub-instances using constraint projections.

\begin{algorithm}[h]
  \begin{algorithmic}
    \Procedure{EnumSolutions}{CSP instance $P = \tup{V, C}$}\Comment{Returns $\sol(P)$}
    \State{$\textup{Solutions} \leftarrow \emptyset$}
    \If {$V = \emptyset$}
      \State\Return{$\{\bot\}$}\Comment{The empty assignment}
    \Else
      \State{$w \leftarrow \textup{chooseVar}(V)$}\Comment{Pick a variable from $V$}
      \State{$\Theta = \textup{EnumSolutions}(\Pj_{V-\{w\}}(P))$}
      \For{$\theta \in \Theta$}
        \For{$a \in D(w)$}
          \State{$\theta'(w) = a$}
          \If{$\theta \oplus \theta'$ is a solution to $P$}
            \State $\textup{Solutions.add}(\theta \oplus \theta')$
          \EndIf
          \State{$\theta' \leftarrow \bot$}
        \EndFor
      \EndFor
    \EndIf
    \State\Return Solutions
    \EndProcedure
  \end{algorithmic}
  \caption{Enumerate all solutions of a CSP instance}
  \label{alg:enum-solutions}
\end{algorithm}

To show that \cref{alg:enum-solutions} does indeed find all solutions,
we will use the following property of constraint projections.

\begin{lemma}
  \label{lemma:solution-projection}
  Let $P = \tup{V, C}$ be a CSP instance. For every $X \subseteq V$,
  we have $\sol(\Pj_X(P)) \supseteq \pi_X(\sol(P))$.
\end{lemma}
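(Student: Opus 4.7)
The plan is to unpack both sides of the claimed inclusion and verify it by direct chase through the definitions. Fix any $\mu \in \pi_X(\sol(P))$; by definition of projection of a set of assignments, there is some $\theta \in \sol(P)$ with $\mu = \theta|_X$. The goal is then to show that $\mu$ satisfies every constraint of $\Pj_X(P)$.

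Recall that $\Pj_X(P)$ has, for each $e[\delta] \in C$ with $X \cap \vars(\delta) \neq \emptyset$, the constraint $\Pj_{X \cap \vars(\delta)}(e[\delta])$ on the scope $X \cap \vars(\delta)$. So I would fix such a constraint and show that $\mu|_{X \cap \vars(\delta)}$ lies in its extension. Since $\theta$ is a solution of $P$, we have $\theta|_{\vars(\delta)} \in e[\delta]$. Applying \cref{def:constraint-projection} to the assignment $\theta|_{\vars(\delta)}$, the restriction $(\theta|_{\vars(\delta)})|_{X \cap \vars(\delta)}$ lies in $\Pj_{X \cap \vars(\delta)}(e[\delta])$. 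A one-line check that successive restrictions commute, $(\theta|_{\vars(\delta)})|_{X \cap \vars(\delta)} = \theta|_{X \cap \vars(\delta)} = (\theta|_X)|_{X \cap \vars(\delta)} = \mu|_{X \cap \vars(\delta)}$, finishes this step.

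The only subtlety I would flag is the edge case $X = \emptyset$ (or more generally $X \cap \vars(\delta) = \emptyset$), which is handled by the convention stated just after the projection definition: in that case $\pi_X(\sol(P)) = \{\bot\}$ whenever $\sol(P)$ is nonempty, and $\Pj_X(P)$ has no constraints involving that $\delta$, so $\bot \in \sol(\Pj_X(P))$ vacuously. Apart from this bookkeeping, nothing is hard: the statement is essentially a definitional unfolding together with the commutativity of restriction, and I do not expect any real obstacle. Note also that equality does not hold in general — $\Pj_X(P)$ can admit ``locally consistent'' partial assignments that do not extend to full solutions — which is why the statement is an inclusion rather than an equality.
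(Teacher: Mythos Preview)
Your proposal is correct and follows essentially the same approach as the paper's own proof: pick a solution $\theta$ of $P$, use that $\theta|_{\vars(\delta)} \in e[\delta]$ for each relevant constraint, and apply \cref{def:constraint-projection} to conclude that $\theta|_X$ satisfies every constraint of $\Pj_X(P)$. Your treatment is in fact slightly more explicit about the commutativity of restrictions and the $X = \emptyset$ edge case, but these are minor expository additions rather than a different argument.
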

\begin{proof}
  Given $P = \tup{V, C}$, let $X \subseteq V$ be arbitrary, and
  let $C' = \{e[\delta] \in C \mid X \cap \vars(\delta) \not=
  \emptyset\}$. For every $\theta \in \sol(P)$ and constraint
  $e[\delta] \in C'$ we have that $\theta|_{\vars(\delta)} \in
  e[\delta]$ since $\theta$ is a solution to $P$. By
  \cref{def:constraint-projection}, it follows that for every
  $e[\delta] \in C'$, $\theta|_{X \cap \vars(\delta)} \in \Pj_{X \cap
    \vars(\delta)}(e[\delta])$. Since the set of constraints of
  $\Pj_X(P)$ is the least set containing for each $e[\delta] \in C'$
  the constraint $\Pj_{X \cap \vars(\delta)}(e[\delta])$, we have
  $\theta|_X \in \sol(\Pj_X(P))$, and hence $\sol(\Pj_X(P)) \supseteq
  \pi_X(\sol(P))$. Since $X$ was arbitrary, the claim follows. \qed
\end{proof}


\begin{theorem}[Correctness of \cref{alg:enum-solutions}]
  For every CSP instance $P$, we have that $\textup{EnumSolutions}(P)
  = \sol(P)$.
\end{theorem}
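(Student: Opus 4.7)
The plan is a straightforward induction on $|V|$, where $V$ is the variable set of the input instance $P = \tup{V, C}$.

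For the base case, $V = \emptyset$, the definition of a CSP instance forces $C = \emptyset$ as well (since $V$ must cover the scopes of all constraints), so the only assignment of $V$ is $\bot$ and $\sol(P) = \{\bot\}$. This matches the algorithm's immediate return value.

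For the inductive step, assume correctness for every CSP instance with fewer than $|V|$ variables. Let $w$ be the variable chosen by $\textup{chooseVar}$ and set $P' = \Pj_{V - \{w\}}(P)$. Since $P'$ has $|V|-1$ variables, the inductive hypothesis gives $\Theta = \sol(P')$. Let $S$ denote the set $\textup{Solutions}$ returned by the algorithm; by construction, $S$ consists of assignments of the form $\theta \oplus \theta'$ with $\theta \in \sol(P')$ and $\theta'(w) \in D(w)$ that pass the explicit membership test $\theta \oplus \theta' \in \sol(P)$. It then suffices to establish $S = \sol(P)$ in both directions.

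The inclusion $S \subseteq \sol(P)$ is immediate because every element of $S$ is only added after an explicit check that it is a solution to $P$. For the reverse inclusion $\sol(P) \subseteq S$, take any $\sigma \in \sol(P)$ and decompose $\sigma = \sigma|_{V - \{w\}} \oplus \sigma|_{\{w\}}$. By \cref{lemma:solution-projection} applied with $X = V - \{w\}$, we have $\sigma|_{V - \{w\}} \in \pi_{V - \{w\}}(\sol(P)) \subseteq \sol(\Pj_{V - \{w\}}(P)) = \sol(P')$, so $\sigma|_{V - \{w\}}$ appears in $\Theta$. The inner loop then iterates over every $a \in D(w)$, in particular $a = \sigma(w)$, so $\sigma = \sigma|_{V - \{w\}} \oplus \sigma|_{\{w\}}$ is considered; it passes the solution test and is added to $S$.

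The only minor subtlety is the handling of the empty assignment $\bot$ in the base case and the fact that we must invoke \cref{lemma:solution-projection} rather than any stronger equality between $\sol(P')$ and $\pi_{V - \{w\}}(\sol(P))$; the lemma gives exactly the inclusion needed to recover every solution of $P$ from some solution of $P'$, while the explicit membership test in $\sol(P)$ discards any extensions of $\sol(P')$ that do not lift to genuine solutions. No other step is delicate.
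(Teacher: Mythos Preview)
Your proof is correct and follows essentially the same approach as the paper: induction on the variable set, with \cref{lemma:solution-projection} supplying the key inclusion $\pi_{V-\{w\}}(\sol(P)) \subseteq \sol(\Pj_{V-\{w\}}(P))$ needed to show that every solution of $P$ is eventually examined. Your version is slightly more explicit in separating the two inclusions $S \subseteq \sol(P)$ and $\sol(P) \subseteq S$, and in justifying the base case, but the argument is otherwise identical.
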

\begin{proof}
  The proof is by induction on the set of variables $V$ in $P$. For
  the base case, if $V = \emptyset$, the empty assignment is the only
  solution.

  Otherwise, choose a variable $w \in V$, and let $X = V
  - \{w\}$. By induction, we can assume that
  $\textup{EnumSolutions}(\Pj_X(P)) = \sol(\Pj_X(P))$. Since for
  every $\theta \in \sol(P)$ there exists $a \in D(w)$ such that
  $\theta = \theta|_X \cup \tup{w, a}$, and furthermore $\theta|_X \in
  \pi_X(\sol(P))$, it follows by \cref{lemma:solution-projection} that
  $\theta|_X \in \sol(\Pj_X(P))$. Since \cref{alg:enum-solutions}
  checks every assignment of the form $\mu \cup \tup{w, a}$ for every
  $\mu \in \sol(\Pj_X(P))$ and $a \in D(w)$, it follows that
  $\textup{EnumSolutions}(P) = \sol(P)$. \qed
\end{proof}

The time required for this algorithm depends on three key factors,
which we are going to enumerate and discuss below. Let
\begin{enumerate}
\item $s(P)$ be the maximum of the number of solutions to each of the
  instances $\Pj_{W}(P)$, for $W \subseteq V$,
\item $c(P)$ be the maximum time required to check whether an
  assignment is a solution to $\Pj_{W}(P)$, for $W \subseteq V$, and
\item $b(P)$ be the maximum time required to construct any instance
  $\Pj_{W}(P)$, for $W \subseteq V$.
\end{enumerate}

There are $|V|$ calls to $\textup{EnumSolutions}$. For
each call, we need $b(P)$ time to construct the projection, while the
double loop takes at most $s(P) \times |D(w)| \times c(P)$
time. Therefore, letting $d = \max(\{|D(w)| \mid w \in V\})$,
the running time of \cref{alg:enum-solutions} is bounded by
$O\big(|V| \times (s(P) \times d \times c(P) + b(P))\big)$.

Since constructing the projection of a classic CSP instance can be
done in polynomial time, and likewise checking that an assignment is a
solution, the whole algorithm runs in polynomial time if $s(P)$ is a
polynomial in the size of $P$. For fractional edge covers, Grohe and
Marx show the following.

\begin{lemma}[\cite{grohe-marx-frac-edge-cover}]
  \label{lemma:fhw-few-solutions}
  A classic CSP instance $P$ has at most $|P|^{\rho^*(\hyp(P))}$
  solutions.
\end{lemma}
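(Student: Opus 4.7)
The plan is to derive this from the fractional edge cover bound on the size of a relational join, which is essentially the content of Shearer's entropy inequality as applied to CSP solutions by Grohe and Marx (and, in a database context, by Atserias, Grohe and Marx).

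First I would fix some notation. Let $P = \tup{V, C}$ be the classic CSP instance and let $G = \hyp(P)$. For each constraint $e[\delta] \in C$, let $R_\delta$ denote its constraint relation, and write $h_\delta = \vars(\delta)$. Because $P$ is classic, $\delta$ is just a list of satisfying assignments, so $|R_\delta| \leq |\delta|$. Since the sum of the $|\delta|$ over $e[\delta] \in C$ is part of the definition of $|P|$, every $|R_\delta|$ is bounded by $|P|$. Let $\gamma : H \to [0,1]$ be an optimal fractional edge cover of $V$, so that $\sum_{e[\delta] \in C} \gamma(h_\delta) = \rho^*(G)$ and $\sum_{v \in h_\delta} \gamma(h_\delta) \geq 1$ for each $v \in V$.

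Second, the core step is to prove the \emph{product} bound
\[
|\sol(P)| \;\leq\; \prod_{e[\delta] \in C} |R_\delta|^{\gamma(h_\delta)}.
\]
I would do this with Shearer's entropy inequality. Draw $\theta$ uniformly at random from $\sol(P)$, viewed as a tuple of random variables indexed by $V$. Uniformity gives $H(\theta) = \log |\sol(P)|$. Shearer's inequality states that whenever $\gamma$ is a fractional cover of $V$,
\[
H(\theta) \;\leq\; \sum_{e[\delta] \in C} \gamma(h_\delta)\, H(\theta|_{h_\delta}).
\]
Since every solution restricts on $h_\delta$ to an element of $R_\delta$, the support of $\theta|_{h_\delta}$ lies in $R_\delta$, and so $H(\theta|_{h_\delta}) \leq \log |R_\delta|$. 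Substituting and exponentiating yields the product bound above.

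Third, combining the product bound with $|R_\delta| \leq |P|$ for every constraint gives
\[
|\sol(P)| \;\leq\; \prod_{e[\delta] \in C} |P|^{\gamma(h_\delta)} \;=\; |P|^{\sum_{e[\delta] \in C} \gamma(h_\delta)} \;=\; |P|^{\rho^*(G)},
\]
which is the desired bound.

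The main obstacle is Shearer's inequality itself, which is the only non-elementary ingredient. I would either appeal to it directly as a known result (it is standard and is what Grohe and Marx use), or sketch the short entropy argument: iteratively apply the chain rule and the submodularity $H(X_A) + H(X_B) \geq H(X_{A \cap B}) + H(X_{A \cup B})$ to decompose $H(\theta)$ over the hyperedges, weighting by $\gamma$ and using $\sum_{v \in h} \gamma(h) \geq 1$ to absorb each coordinate. Everything else in the proof — bounding $|R_\delta|$ by $|P|$ and collapsing the product into a single power of $|P|$ — is routine.
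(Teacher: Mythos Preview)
The paper does not actually prove this lemma: it is stated with a citation to Grohe and Marx and followed only by an informal remark about why fractional edge covers force few solutions. Your argument via Shearer's entropy inequality is exactly the approach used in the cited Grohe--Marx paper, so your proposal is correct and matches the intended source; the only (harmless) slip is summing $\gamma(h_\delta)$ over constraints rather than hyperedges, which could overcount if several constraints share a scope, but this is easily fixed by summing over the hyperedge set $H$ instead.
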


The reason for \cref{lemma:fhw-few-solutions} is that fractional edge
covers require the hypergraph to be quite dense, and also that the
hyperedges grow with the number of vertices in the hypergraph. This
result has since been shown to be optimal --- a classic CSP instance
has polynomially many solutions in its size if and only if it has
bounded fractional edge cover number
\cite{DBLP:journals/siamcomp/AtseriasGM13}.

Since fractional edge cover number is a monotone width function, it
follows that for any instance $P = \tup{V, C}$ and $X \subseteq V$,
$\rho^*(\hyp(\Pj_X(P))) \leq \rho^*(\hyp(P))$. This claim follows from
the fact that $\Pj_X(P)$ projects every constraint down to $X$, and
hence every hyperedge of $\hyp(P)$ down to $X$. Therefore, for classic
CSP instances of bounded fractional edge cover number $s(P)$ is indeed
polynomial in $|P|$. Grohe and Marx use this property to solve
instances with bounded fractional hypertree width (and hence, bounded
fractional edge cover number for every node in the corresponding tree
decomposition) in polynomial time.

\subsection{CSP Instances with Few Solutions in Key Places}
\label{sect:few-solutions}

As we have seen above, having few solutions for every projection of a
CSP instance is a property that can be used to obtain tractable
classes of classic CSP instances. More importantly, we have shown that
this property allows us to find all solutions to a CSP instance $P$,
even with global constraints, if we can build arbitrary projections of
$P$ in polynomial time. In other words, with these two conditions we
should be able to reduce instances with global constraints to classic
instances in polynomial time. This, in turn, should allows us to apply
the structural decomposition techniques discussed in
\cref{sect:structural-restrictions} to such instances.

However, on reflection there is no reason why we should need few
solutions for \emph{every} projection. Instead, consider the following
reduction.

\begin{definition}[Partial assignment checking]
  \label{def:part-assignment-checking}
  A global constraint \languageword{} $\Gamma$ allows \emph{partial
    assignment checking} if there exists a polynomial $p(n)$ such that
  for any constraint $e[\delta] \in \Gamma$ we can decide in time
  $O(p(|\delta|))$ whether a given assignment $\theta$ to a set of
  variables $W \subseteq \vars(\delta)$ is contained in an assignment
  that satisfies $e[\delta]$, i.e.~whether there exists $\mu \in
  e[\delta]$ such that $\theta = \mu|_W$.
\end{definition}

As an example, a \languageword{} that contains arbitrary EGC
constraints (cf.~\cref{example:egc}) does not satisfy
\cref{def:part-assignment-checking}, since checking whether an
arbitrary EGC constraint has a satisfying assignment is \NP-hard
\cite{quimper04-gcc-npc}. On the other hand, a \languageword{} that
contains only EGC constraints whose cardinality sets are intervals
does satisfy \cref{def:part-assignment-checking}
\cite{Regin96:aaai-generalized}.

If a \languageword{} $\Gamma$ satisfies
\cref{def:part-assignment-checking}, we can for any constraint
$e[\delta] \in \Gamma$ build arbitrary projections of it, that is,
construct the global constraint $\Pj_X(e[\delta])$ for any $X
\subseteq \vars(\delta)$, in polynomial time. In the case of
\cref{alg:enum-solutions}, where we build projections of projections,
we can do so by keeping a copy of the original constraint, and
projecting that each time.

\begin{definition}[Intersection variables]
  \label{def:intersection-vertices}
  Let $\tup{V, C}$ be a CSP instance. The set of \emph{intersection
    variables} of any constraint $e[\delta] \in P$ is $\iv(\delta) =
  \bigcup \{ \vars(\delta) \cap \vars(\delta') \mid e'[\delta'] \in C
  - \{e[\delta]\}\}$.
\end{definition}

Intersection variables are, in a sense, the only ``interesting''
variables of a constraint, as they are the ones interacting with the
rest of the problem.

\begin{definition}[Table constraint induced by a global constraint]
  \label{def:ind-const}
  Let $P = \tup{V, C}$ be a CSP instance. For every $e[\delta] \in C$,
  let $\mu^*$ be the assignment to $\vars(\delta) - \iv(\delta)$ that
  assigns a special value $*$ to every variable. The \emph{table
    constraint induced by $e[\delta]$} is $\ic(e[\delta]) =
  e'[\delta']$, where $\vars(\delta') = \vars(\delta)$, and $\delta'$
  contains for every assignment $\theta \in
  \sol(\Pj_{\iv(\delta)}(P))$ the assignment $\theta \oplus \mu^*$.
\end{definition}

If every constraint in a CSP instance $P = \tup{V, C}$ allows partial
assignment checking, then building $\ic(e[\delta])$ for any $e[\delta]
\in C$ can be done in polynomial time when $|\sol(\Pj_{X}(P))|$ is
itself polynomial in the size of $P$ for every subset $X$ of
$\iv(\delta)$. To do so, we can invoke \cref{alg:enum-solutions} on
the instance $\Pj_{\iv(\delta)}(P)$. The definition below expresses
this idea.

\begin{definition}[Sparse intersections]
  \label{def:sparse-intersections}
  A class of CSP instances $\mathcal P$ \emph{has sparse
    intersections} if there exists a constant $c$ such that for every
  constraint $e[\delta]$ in any instance $P \in \mathcal P$, we have
  that for every $X \subseteq \iv(\delta)$, $|\sol(\Pj_X(P))| <
  |P|^c$.
\end{definition}

If a class of instances $\mathcal P$ has sparse intersections, and the
instances are all over a constraint \languageword{} that allows
partial assignment checking, then we can for every constraint
$e[\delta]$ of any instance from $\mathcal P$ construct
$\ic(e[\delta])$ in polynomial time. While this definition considers
the instance as a whole, one special case of it is the case where
every constraint has few solutions in the size of its description,
that is, there is a constant $c$ and the constraints are drawn from a
\languageword{} $\Gamma$ such that for every $e[\delta] \in \Gamma$,
we have that $|\{\mu \mid \mu \in e[\delta]\}| < |\delta|^c$.

Note that the problem of checking whether a class of CSP instances
satisfies \cref{def:sparse-intersections} for a given $c$ is, in
general, hard. To see this, consider the special case of checking
whether a global constraint $e[\delta]$ has any satisfying assignments
at all. Letting $\delta$ be a SAT instance, that is, a propositional
formula, and $e$ an algorithm that checks whether an assignment to
$\vars(\delta)$ satisfies the formula makes this an \NP-hard problem
to solve. 

More generally, consider an arbitrary problem in \NP{}. By definition,
there is a polynomial-time algorithm that can check if a proposed
solution to such a problem is correct. By treating the algorithm as
the constraint type $e$, and the problem instances as descriptions
$\delta$, with a variable in $\vars(\delta)$ for each bit of the
solution, it becomes clear that every problem in \NP{} corresponds to
a class of global constraints. The fact that global constraints have
this much expressive power will be explored further in
\cref{sect:subproblem-decompositions}.

Despite such bad news, however, it is not always difficult to
recognise constraints with polynomially many satisfying assignments. A
trivial example would be table constraints. For a less trivial
example, consider the constraint $C^\beta$ from
\cref{example:cgp-as-csp}, where the number of satisfying assignments
is bounded by a polynomial with exponent $\beta$ (cf.~the discussion
after \cref{corollary:fewsol-csp} for a detailed analysis). 

For a more general example, consider a family of constraints that
satisfy \cref{def:part-assignment-checking}. To check whether the
number of solutions to a constraint from such a family is bounded by
$|\delta|^c$ for a fixed $c$ in polynomial time, we can use
\cref{alg:enum-solutions}, stopping it if the number of partial
assignments that extend to solutions exceeds the bound. Since we can
check whether a partial assignment extends to a solution in polynomial
time by \cref{def:part-assignment-checking}, we are also guaranteed an
answer in polynomial time.



Armed with these definitions, we can now state the following result.

\begin{theorem}
  \label{thm:decomp-to-classic}
  Let $\mathcal P$ be a class of CSP instances over a \languageword{}
  that allows partial assignment checking. If $\mathcal P$ has sparse
  intersections, then we can in polynomial time reduce any instance $P
  \in \mathcal P$ to a classic CSP instance $P_{CL}$ with $\hyp(P) =
  \hyp(P_{CL})$, such that $P_{CL}$ has a solution if and only if $P$
  does.
\end{theorem}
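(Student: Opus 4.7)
The plan is to reduce $P = \tup{V, C}$ to $P_{CL}$ by replacing every global constraint $e[\delta] \in C$ with its induced table constraint $\ic(e[\delta])$ from \cref{def:ind-const}. Since $\vars(\ic(e[\delta])) = \vars(\delta)$ by definition, the hypergraph is preserved automatically. The two things to verify are (i) that we can compute $\ic(e[\delta])$ in polynomial time for every constraint, and (ii) that $P_{CL}$ and $P$ are equi-satisfiable.

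For (i), I would first observe that \cref{def:part-assignment-checking} lets us check in polynomial time, for any $e'[\delta'] \in C$ and any subset $Y \subseteq \vars(\delta')$, whether a given assignment of $Y$ lies in $\Pj_Y(e'[\delta'])$. This means we can work with any projection of $P$ implicitly, simply storing the original constraints and performing partial-assignment queries whenever the projected version is queried. To build $\ic(e[\delta])$, run \cref{alg:enum-solutions} on $\Pj_{\iv(\delta)}(P)$. Every recursive sub-instance encountered is a projection $\Pj_W(P)$ with $W \subseteq \iv(\delta)$, so by \cref{def:sparse-intersections} each has at most $|P|^c$ solutions. Combined with polynomial-time satisfaction checking and polynomial-time (implicit) construction of projections, the running time bound $O(|V|(s(P) \cdot d \cdot c(P) + b(P)))$ derived earlier for \cref{alg:enum-solutions} is polynomial. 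Padding every enumerated solution with $\mu^*$ on $\vars(\delta) - \iv(\delta)$ then gives $\ic(e[\delta])$ as an explicit table in polynomial time.

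For (ii), the forward direction is immediate from \cref{lemma:solution-projection}: if $\theta \in \sol(P)$, then $\theta|_{\iv(\delta)} \in \sol(\Pj_{\iv(\delta)}(P))$ for every $e[\delta] \in C$, so the assignment $\theta_{CL}$ defined to equal $\theta$ on $\bigcup_\delta \iv(\delta)$ and to assign $*$ to all remaining variables lies in $\sol(P_{CL})$. The backward direction is the one that needs care. Given $\theta_{CL} \in \sol(P_{CL})$, for each $e[\delta] \in C$ we have $\theta_{CL}|_{\iv(\delta)} \in \sol(\Pj_{\iv(\delta)}(P))$, and in particular this restriction lies in $\Pj_{\iv(\delta)}(e[\delta])$, so it extends to some $\mu_\delta \in e[\delta]$. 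The crucial observation is that the sets $\vars(\delta) - \iv(\delta)$ are pairwise disjoint across different constraints --- this is precisely the definition of intersection variables --- so defining $\theta$ to agree with $\theta_{CL}$ on the intersection variables and with $\mu_\delta$ on each non-intersection set $\vars(\delta) - \iv(\delta)$ yields a well-defined assignment of $V$. By construction $\theta|_{\vars(\delta)} = \mu_\delta$ for every $e[\delta] \in C$, so $\theta \in \sol(P)$.

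The main obstacle, and the place I would spend the most care, is ensuring that the polynomial-time bound survives the interplay between partial assignment checking and the recursive projections inside \cref{alg:enum-solutions}. The sparse intersections hypothesis must be invoked for \emph{every} subset $W$ of $\iv(\delta)$ that the recursion visits, and the algorithm's work at each level has to be charged against a polynomial-size object rather than against some intermediate explicit representation of a projected constraint; the ``implicit projection via partial assignment checks'' trick is exactly what makes this go through. The equi-satisfiability argument, by contrast, is essentially bookkeeping once the disjointness of the non-intersection variable sets is noticed.
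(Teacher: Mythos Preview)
Your proposal is correct and follows essentially the same approach as the paper: replace each constraint by its induced table constraint via \cref{def:ind-const}, use \cref{alg:enum-solutions} on $\Pj_{\iv(\delta)}(P)$ together with partial assignment checking and sparse intersections for the polynomial-time bound, and argue equi-satisfiability by gluing the per-constraint extensions along the pairwise-disjoint non-intersection variable sets. If anything, you are more explicit than the paper about the implicit-projection trick and about why sparse intersections must hold for every $W \subseteq \iv(\delta)$, but the structure of the argument is identical.
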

\begin{proof}
  Let $P=\tup{V, C}$ be an instance from such a class $\mathcal
  P$. For each $e[\delta] \in C$, $P_{CL}$ will contain the table
  constraint $\ic(e[\delta])$ from \cref{def:ind-const}. Since $P$ is
  over a \languageword{} that allows partial assignment checking, and
  $\mathcal P$ has sparse intersections, computing $\ic(e[\delta])$
  can be done in polynomial time by invoking \cref{alg:enum-solutions}
  on $\Pj_{\iv(\delta)}(P)$.

  By construction, $\hyp(P) = \hyp(P_{CL})$. All that is left to show
  is that $P_{CL}$ has a solution if and only if $P$ does. Let
  $\theta$ be a solution to $P = \tup{V, C}$. For every $e[\delta] \in
  C$, we have that $\theta|_{\iv(\delta)} \in \Pj_{\iv(\delta)}(P)$ by
  \cref{def:constraint-projection,def:intersection-vertices}, and the
  assignment $\mu$ that assigns the value $\theta(v)$ to each $v \in
  \displaystyle\bigcup_{e[\delta] \in C}\iv(\delta)$, and $*$ to every
  other variable is therefore a solution to $P_{CL}$.

  In the other direction, if $\theta$ is a solution to $P_{CL}$, then
  $\theta$ satisfies $\ic(e[\delta])$ for every $e[\delta] \in C$. By
  \cref{def:ind-const}, this means that $\theta|_{\iv(\delta)} \in
  \sol(\Pj_{\iv(\delta)}(P))$, and by
  \cref{def:constraint-projection}, there exists an assignment
  $\mu^{e[\delta]}$ with $\mu^{e[\delta]}|_{\iv(\delta)} =
  \theta|_{\iv(\delta)}$ that satisfies $e[\delta]$. By
  \cref{def:intersection-vertices}, the variables not in $\iv(\delta)$
  do not occur in any other constraint in $P$, so we can combine all
  the assignments $\mu^{e[\delta]}$ to form a solution $\mu$ to $P$
  such that for $e[\delta] \in C$ and $v \in \vars(\delta)$ we have
  $\mu(v) = \mu^{e[\delta]}(v)$. \qed
\end{proof}

From \cref{thm:decomp-to-classic}, we get tractable and
fixed-parameter tractable classes of CSP instances with global
constraints, in particular by applying \cref{thm:width-new-notation}.

\begin{corollary}
  \label{corollary:fewsol-csp}
  Let $\mathcal H$ be a class of hypergraphs, and $\languagesymbol$ a
  \languageword{} that allows partial assignment checking. If
  $\CSP(\mathcal H, \languagesymbol)$ has sparse intersections, then
  $\CSP(\mathcal H, \languagesymbol)$ is trac\-ta\-ble or in \FPT{} if
  $\CSP(\mathcal H, \Extlang)$ is.
\end{corollary}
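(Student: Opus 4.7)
The plan is to combine \cref{thm:decomp-to-classic} with \cref{thm:width-new-notation} essentially as a chaining argument, observing that the reduction in \cref{thm:decomp-to-classic} is hypergraph-preserving and polynomial-time, and hence also (for the FPT case) parameter-preserving when the parameter is the number of variables.

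First I would invoke \cref{thm:decomp-to-classic}: given $P \in \CSP(\mathcal H, \languagesymbol)$, the hypotheses on $\languagesymbol$ and on sparse intersections are exactly what the theorem requires, so we can construct in polynomial time a classic CSP instance $P_{CL}$ with $\hyp(P_{CL}) = \hyp(P)$ and such that $\sol(P_{CL}) \neq \emptyset$ iff $\sol(P) \neq \emptyset$. Since $\hyp(P) \in \mathcal H$, we have $P_{CL} \in \CSP(\mathcal H, \Extlang)$. Because the reduction runs in polynomial time, $|P_{CL}|$ is polynomial in $|P|$; moreover, since the vertex set of $\hyp(P_{CL})$ equals that of $\hyp(P)$, the variable set of $P_{CL}$ coincides with that of $P$.

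For the tractability case, if $\CSP(\mathcal H, \Extlang)$ is tractable, then there is a polynomial-time algorithm that decides $P_{CL}$. Concatenating the polynomial reduction with this polynomial decision procedure yields a polynomial-time algorithm for $P$, showing that $\CSP(\mathcal H, \languagesymbol)$ is tractable. For the FPT case, we recall that the parameterization of CSP instances used for \cref{cor:subw,thm:width-new-notation} is by $|V|$. Since the reduction preserves the variable set, the parameter value for $P_{CL}$ equals that for $P$. If $\CSP(\mathcal H, \Extlang)$ is in \FPT{} with running time $O(f(|V|) \cdot |P_{CL}|^c)$ for some computable $f$ and constant $c$, then substituting the polynomial bound on $|P_{CL}|$ in $|P|$ yields a bound of the form $O(f(|V|) \cdot |P|^{c'})$ for $P$, so $\CSP(\mathcal H, \languagesymbol)$ lies in \FPT{} as well.

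The main obstacle I anticipate is nothing deep, only careful bookkeeping in the FPT case: one must verify that the parameter is genuinely preserved (which follows from $\hyp(P_{CL}) = \hyp(P)$) and that the polynomial blow-up from the reduction is absorbed into the $|P|^{c'}$ factor rather than into $f$. Everything else is immediate from the two cited results.
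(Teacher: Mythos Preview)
Your proposal is correct and follows essentially the same approach as the paper: invoke \cref{thm:decomp-to-classic} to obtain a polynomial-time, hypergraph-preserving reduction to $\CSP(\mathcal H,\Extlang)$, and then appeal to the assumed tractability or \FPT{} status of that class. The paper's own proof is a terse three-sentence version of exactly this argument; your additional bookkeeping for the \FPT{} case (parameter preservation via $\hyp(P_{CL})=\hyp(P)$ and absorption of the polynomial blow-up) is correct and simply spells out what the paper leaves implicit.
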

\begin{proof}
  Let $\mathcal H$ and $\languagesymbol$ be given. By
  \cref{thm:decomp-to-classic}, we can reduce any $P \in \CSP(\mathcal
  H, \languagesymbol)$ to an instance $P_{CL} \in \CSP(\mathcal H,
  \Extlang)$ in polynomial time. Since $P_{CL}$ has a solution if and
  only if $P$ does, tractability or fixed-parameter tractability of
  $\CSP(\mathcal H, \Extlang)$ implies the same for $\CSP(\mathcal
  H,\languagesymbol)$. \qed
\end{proof}

To illustrate the above result, consider again the connected graph
partition problem (\cref{prob:cgp}). This problem is \NP-complete
\cite[p.~209]{Garey79:intractability}, even for fixed $\alpha \geq
3$. However, note that when $\beta$ is fixed, we can solve the problem
in polynomial time, by successively guessing sets $E'$, with $|E'|\leq
\beta$, of broken edges, and checking whether the connected components
of the graph $\tup{V, E - E'}$ all have $\alpha$ or fewer
vertices. The number of such sets $E'$ is bounded by
$\displaystyle\sum_{i=1}^{\beta} \binom{|E|}{i} \leq (|E|+1)^\beta$,
which is polynomial if $\beta$ is fixed. As we show below, this
argument can be seen as a special case of
\cref{thm:decomp-to-classic}. To simplify the analysis, we assume
without loss of generality that $\alpha < |V|$, which means that any
solution has at least one broken edge.

We claim that if $\beta$ is fixed, then the constraint $C^\beta =
e^\beta[\delta^\beta]$ allows partial assignment checking, and has
only a polynomial number of satisfying assignments. The latter implies
that for any instance $P$ of the CGP,
$|\sol(\Pj_{\iv(\delta^\beta)}(P))|$ is polynomial in the size of $P$
for every subset of $\iv(\delta^\beta)$. Furthermore, we will show
that for the constraint $C^\alpha = e^\alpha[\delta^\alpha]$, we also
have that $|\sol(\Pj_{\iv(\delta^\alpha)}(P))|$ is polynomial in the
size of $P$. That $C^\alpha$ allows partial assignment checking can be
seen by noting that each variable in $\vars(\delta^\alpha)$ has a
domain value for every vertex in the underlying graph. Therefore,
given a partial assignment to $\vars(\delta^\alpha)$, we can check
that no value is assigned more than $\alpha$ times. If yes, this
assignment can be extended to a full one by assigning each remaining
variable a domain value not yet assigned to any variable.

First, we show that the number of satisfying assignments to $C^\beta$
is limited. Since $C^\beta$ limits the number of ones in any solution
to $\beta$, the number of satisfying assignments to this constraint is
the number of ways to choose up to $\beta$ variables to be assigned
one. This is bounded by $\displaystyle\sum^{\beta}_{i=1}\binom{|E|}{i}
\leq (|E|+1)^\beta$, and so we can generate them all in polynomial
time. This argument also implies that we can perform partial
assignment checking, simply by looking at the generated assignments.

Now, let $\theta$ be such a solution. How many solutions to $P$
contain $\theta$? Every constraint on $\{u, v, e\}$ with $\theta(e)=1$
allows at most $|V|^2$ assignments, and there are at most $\beta$ such
constraints. So far we therefore have at most $(|E|+1)^\beta \times
|V|^{2\beta}$ assignments.

On the other hand, a ternary constraint with $\theta(e)=0$ requires
$\theta(u) = \theta(v)$. Consider the graph $G_0$ containing for every
constraint on $\{u, v, e\}$ with $\theta(e)=0$ the vertices $u$ and
$v$ as well as the edge $\{u, v\}$. Since the original graph was
connected, every connected component of $G_0$ contains at least one
vertex which is in the scope of some constraint with
$\theta(e)=1$. Therefore, since equality is transitive, each connected
component of $G_0$ allows at most one assignment for each of the
$(|E|+1)^\beta \times |V|^{2\beta}$ assignments to the other variables
of $P$. We therefore get a total bound of $(|E|+1)^\beta \times
|V|^{2\beta}$ on the total number of solutions to $P$, and hence to
$\Pj_{\iv(\delta^\alpha)}(P)$.

The hypergraph of any CSP instance $P$ encoding the CGP has two
hyperedges covering the whole problem, so the hypertree width of this
hypergraph is two. Therefore,
\cref{corollary:fewsol-csp,thm:width-tract} apply and yield
tractability for fixed $\beta$.

\subsection{Back Doors}

\label{sect:back-doors}

If a class of CSP instances includes constraints from a
\languageword{} that is not known to allow partial assignment
checking, we may still obtain tractability in some cases by applying
the notion of a back door set. A (strong) back door set
\cite{gaspers-backdoors,Williams03backdoorsto} is a set of variables
in a CSP instance that, when assigned, make the instance easy to
solve. Below, we are going to adapt this notion to individual
constraints.

\begin{definition}[Back door]
  \label{def:back-doors}
  Let $\languagesymbol$ be a global constraint \languageword{}. A
  \emph{back door} for a constraint $e[\delta] \in \languagesymbol$ is
  any set of variables $W \subseteq \vars(\delta)$ (called a back door
  set) such that we can decide in polynomial time whether a given
  assignment $\theta$ to a set of variables $\vars(\theta) \supseteq
  W$ is contained in an assignment that satisfies $e[\delta]$,
  i.e.~whether there exists $\mu \in e[\delta]$ such that
  $\mu|_{\vars(\theta)} = \theta$.
\end{definition}

Trivially, for every constraint $e[\delta]$ the set of variables
$\vars(\delta)$ is a back door set, since by \cref{def:glob-const} we
can always check in polynomial time if an assignment to
$\vars(\delta)$ satisfies the constraint $e[\delta]$.

The key point about back doors is that given a \languageword{}
$\Gamma$, adding to each $e[\delta] \in \Gamma$ with back door set $W$
an arbitrary set of assignments to $W$ produces a \languageword{}
$\Gamma'$ that allows partial assignment checking. Adding a set of
assignments $\Theta$ means to add $\Theta$ to the description, and
modify the algorithm $e$ to only accept an assignment if it contains a
member of $\Theta$ in addition to previous requirements. Furthermore,
given a CSP instance $P$ containing $e[\delta]$, as long as $\Theta
\supseteq \pi_{W}(\sol(P))$, adding $\Theta$ to $e[\delta]$ produces
an instance that has exactly the same solutions. This point leads to
the following definition.

\begin{definition}[Sparse back door cover]
  \label{def:sparse-back-door-cover}
  Let $\Gamma_{PAC}$ be a \languageword{} that allows partial
  assignment checking and $\Gamma_{BD}$ a \languageword{}. For every
  instance $P = \tup{V, C}$ over $\Gamma_{PAC} \cup \Gamma_{BD}$, let
  $P \cap \Gamma_{PAC}$ be the instance with constraint set $C' = C
  \cap \Gamma_{PAC}$ and set of variables $\bigcup \{V \cap
  \vars(\delta) \mid e[\delta] \in C'\}$.

  A class of CSP instances $\mathcal P$ over $\Gamma_{PAC} \cup
  \Gamma_{BD}$ has \emph{sparse back door cover} if there exists a
  constant $c$ such that for every instance $P = \tup{V, C} \in
  \mathcal P$ and constraint $e[\delta] \in C$, if $e[\delta] \not\in
  \Gamma_{PAC}$, then there exists a back door set $W$ for
  $e[\delta]$, findable in time polynomial in $|P|$, such that
  $|\sol(\Pj_X(P \cap \Gamma_{PAC}))| \leq |P|^c$ for every $X
  \subseteq W$.
\end{definition}

Sparse back door cover means that for each constraint that is not from
a \languageword{} that allows partial assignment checking, we can in
polynomial time get a set of assignments $\Theta$ for its back door
set using \cref{alg:enum-solutions}, and so turn this constraint into
one that does allow partial assignment checking. This operation
preserves the solutions of the instance that contains this constraint.

\begin{theorem}
  \label{lemma:backdoors}
  If a class of CSP instance $\mathcal P$ has sparse back door cover,
  then we can in polynomial time reduce any instance $P \in \mathcal
  P$ to an instance $P'$ such that $\hyp(P) = \hyp(P')$ and $\sol(P) =
  \sol(P')$. Furthermore, the class of instances $\{P' \mid P \in
  \mathcal P\}$ is over a \languageword{} that allows partial
  assignment checking.
\end{theorem}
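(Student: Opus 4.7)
The plan is to construct $P'$ by modifying each constraint of $P = \tup{V,C}$ that does not already come from $\Gamma_{PAC}$, leaving the others untouched. Fix such a non-PAC constraint $e[\delta] \in C$. By the sparse back door cover assumption, we can in time polynomial in $|P|$ find a back door set $W \subseteq \vars(\delta)$ for $e[\delta]$ with $|\sol(\Pj_X(P \cap \Gamma_{PAC}))| \leq |P|^c$ for every $X \subseteq W$. The first main step is to compute the set $\Theta_{e[\delta]} = \sol(\Pj_W(P \cap \Gamma_{PAC}))$. Since $P \cap \Gamma_{PAC}$ is an instance over a \languageword{} that allows partial assignment checking, and since the sparsity bound on $W$ forces every intermediate projection produced inside \textsc{EnumSolutions} to stay polynomially bounded, running \cref{alg:enum-solutions} on $\Pj_W(P \cap \Gamma_{PAC})$ terminates in polynomial time and returns $\Theta_{e[\delta]}$.

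Next, I replace $e[\delta]$ by a new constraint $e'[\delta']$ with $\vars(\delta') = \vars(\delta)$, whose description pairs $\delta$ with the explicit list $\Theta_{e[\delta]}$, and whose algorithm accepts an assignment $\theta$ iff $e[\delta](\theta) = 1$ and $\theta|_W \in \Theta_{e[\delta]}$. Since $|\Theta_{e[\delta]}| \leq |P|^c$, the description remains polynomial in $|P|$, and the new algorithm still runs in polynomial time. Performing this replacement for every non-PAC constraint in $C$ yields $P'$ in polynomial time, and since no scope is altered we have $\hyp(P) = \hyp(P')$.

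For $\sol(P) = \sol(P')$, every solution of $P'$ is clearly a solution of $P$, because $e'[\delta']$ is at least as restrictive as $e[\delta]$. Conversely, for any $\theta \in \sol(P)$ and any replaced constraint $e[\delta]$, the projection $\theta|_W$ belongs to $\pi_W(\sol(P))$; by \cref{lemma:solution-projection} applied to $P \cap \Gamma_{PAC}$ (whose solutions form a superset of the projections of $\sol(P)$ onto its variables), we have $\theta|_W \in \sol(\Pj_W(P \cap \Gamma_{PAC})) = \Theta_{e[\delta]}$, so $\theta$ also satisfies $e'[\delta']$.

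The step I expect to take most care is showing that the resulting \languageword{} allows partial assignment checking. Given a partial assignment $\mu$ to some $W'' \subseteq \vars(\delta')$, I need to decide in polynomial time whether $\mu$ extends to a satisfying assignment of $e'[\delta']$. The idea is to iterate over the at most $|P|^c$ elements $\tau \in \Theta_{e[\delta]}$ and keep only those compatible with $\mu$ on $W'' \cap W$; for each such $\tau$, the combined assignment $\mu \cup \tau$ on $W'' \cup W$ contains the back door set $W$ of $e[\delta]$, so by \cref{def:back-doors} we can check in polynomial time whether it extends to some $\rho \in e[\delta]$. Any such $\rho$ automatically satisfies $\rho|_W = \tau \in \Theta_{e[\delta]}$ and so witnesses $\rho \in e'[\delta']$; conversely, any satisfying extension of $\mu$ in $e'[\delta']$ restricts on $W$ to some $\tau \in \Theta_{e[\delta]}$ and is hence found by this loop. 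The total cost is polynomial, establishing the partial assignment checking property for the new \languageword{}.
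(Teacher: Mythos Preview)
Your proof is correct and follows essentially the same approach as the paper: compute $\Theta_{e[\delta]} = \sol(\Pj_W(P \cap \Gamma_{PAC}))$ via \cref{alg:enum-solutions}, attach it to each non-PAC constraint, and argue that solutions are preserved because $\Theta_{e[\delta]} \supseteq \pi_W(\sol(P))$. The one place you go beyond the paper is the final paragraph: the paper simply asserts (relying on the informal discussion preceding the theorem) that the modified constraints allow partial assignment checking, whereas you actually spell out the algorithm---iterate over $\tau \in \Theta_{e[\delta]}$ compatible with $\mu$ and invoke the back-door property on $\mu \cup \tau$---and verify its correctness and polynomial running time.
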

\begin{proof}
  Let $P =\tup{V, C} \in \mathcal P$. We construct $P'$ by adding to
  every $e[\delta] \in C$ such that $e[\delta] \not\in \Gamma_{PAC}$,
  with back door set $W$, the set of assignments $\sol(\Pj_W(P \cap
  \Gamma_{PAC}))$, which we can obtain using
  \cref{alg:enum-solutions}. By \cref{def:sparse-back-door-cover}, we
  have for every $X \subseteq W$ that $|\sol(\Pj_W(P \cap
  \Gamma_{PAC}))| \leq |P|^c$, so \cref{alg:enum-solutions} takes
  polynomial time since $\Gamma_{PAC}$ does allow partial assignment
  checking.

  It is clear that $\hyp(P') = \hyp(P)$, and since $\sol(\Pj_W(P \cap
  \Gamma_{PAC})) \supseteq \pi_W(\sol(P))$, the set of solutions stays
  the same, i.e.~$\sol(P') = \sol(P)$. Finally, since we have replaced
  each constraint $e[\delta]$ in $P$ that was not in $\Gamma_{PAC}$ by
  a constraint that does allow partial assignment checking, it follows
  that $P'$ is over a \languageword{} that allows partial assignment
  checking. \qed
\end{proof}

One consequence of \cref{lemma:backdoors} is that we can sometimes
apply \cref{thm:decomp-to-classic} to a CSP instance that contains a
constraint for which checking if a partial assignment can be extended
to a satisfying one is hard. We can do so when the variables of that
constraint are covered by the variables of other constraints that do
allow partial assignment checking --- but only if the instance given
by those constraints has few solutions.

As a concrete example of this, consider again the encoding of the CGP
that we gave in \cref{example:cgp-as-csp}. The variables of constraint
$C^\alpha$ are entirely covered by the instance $P'$ obtained by
removing $C^\alpha$. As the entire set of variables of a constraint is
a back door set for it, and the instance $P'$ has few solutions
(cf.~the discussion after \cref{thm:decomp-to-classic}), this class of
instances has sparse back door cover. As such, the constraint
$C^\alpha$ could, in fact, be arbitrary without affecting the
tractability of this problem. In particular, the requirement that
$C^\alpha$ allows partial assignment checking can be dropped.

\section{Subproblem Decompositions}
\label{sect:subproblem-decompositions}

To generalize \cref{thm:decomp-to-classic}, consider the fact that our
definition of a global constraint allows us to view a CSP instance
$\tup{V, C}$ as a single constraint $e[\delta]$, by letting $\delta$
contain the set of constraint $C$, and setting $\vars(\delta) =
V$. The algorithm $e$ then checks if an assignment satisfies all
constraints. Of course, such a constraint encodes an \NP-complete
problem, but this is no different from e.g.~the EGC constraint
\cite{quimper04-gcc-npc} (cf.~\cref{example:egc}). With this in mind,
in this section we are going to investigate what happens if a CSP
instance is split up into a set of smaller instances.

Splitting up a (classic) CSP instance into smaller instances has
previously been considered by Cohen and Green
\cite{guarded-decomp-CG}. They use a very general framework of guarded
decompositions \cite{CohenUnifTract} to define what they call ``typed
guarded decompositions''. This notion allows them to obtain a
tractability result for a CSP instance that can be split into smaller
instances drawn from known tractable classes.

In this section, we are going to adapt the notions defined in
\cref{sect:few-solutions} to work with CSP instances rather than
single constraints. Then, in \cref{sect:cohen-green}, we will show how
the result of Cohen and Green can be derived as a special case of
\cref{corollary:subproblem-csp}.

\begin{definition}[CSP subproblem]
  Given two CSP instances $P = \tup{V, C}$ and $P' = \tup{V', C'}$, we
  say that $P'$ is a \emph{subproblem} of $P$ if $C' \subseteq C$.
\end{definition}

In other words, a subproblem of a CSP instance is given by a subset of
the constraints in that instance. In \cite{guarded-decomp-CG}, Cohen
and Green call a subproblem a \emph{component} of $P$.

\begin{definition}[CSP union]
  Let $Q_1 = \tup{V_1, C_1}$ and $Q_2 = \tup{V_2, C_2}$ be two CSP
  instances. The \emph{union} of $Q_1$ and $Q_2$ is the instance $Q_1
  \sqcup Q_2 = \tup{V_1 \cup V_2, C_1 \cup C_2}$.

\end{definition}


\begin{definition}[Subproblem decomposition]
  \label{def:subproblem-decomp}
  Let $P$ be a CSP instance. A set $S$ of subproblems of $P$ is a
  \emph{subproblem decomposition} of $P$ if $\bigsqcup S = P$.

  A subproblem decomposition of a CSP instance is \emph{proper} if no
  element of the decomposition is a subproblem of any other.
\end{definition}

A subproblem decomposition of an instance $P$, then, is a set of
subproblems that together contain all the constraints and variables of
$P$. Note that a constraint may occur in more than one subproblem in a
decomposition.

Below, we shall assume that all subproblem decompositions are
proper. Since subproblems are given by subsets of constraints, the
solutions to a CSP instance can be turned into solutions for any
subproblem by projecting out the variables not part of the
subproblem. Therefore, solving a subproblem $P$ that contains another
subproblem $P'$ also solves $P'$, making $P'$ redundant.

\begin{example}
  Let $P = \tup{V, C}$ be a CSP instance. A very simple subproblem
  decomposition of $P$ would be $\{\tup{\vars(\delta), e[\delta]} \mid
  e[\delta] \in C\}$, that is, every constraint of $P$ is a separate
  subproblem. This subproblem decomposition is clearly proper.
\end{example}

\begin{example}
  \label{example:subproblem-decompositions-simple}
  Consider a family of CSP instances on the set of boolean variables
  $\{x_i, y_i, z_{i} \mid 1 \leq i \leq n \in \{4, 6, 8, \ldots\}\}$,
  with the following constraints: An EGC constraint $A$ on
  $\{x_1,\ldots,x_n\}$ with $K(1)=4$ and $K(0) = \{0,\ldots,n\}$. A
  second EGC constraint $B$, on $\{y_1,\ldots,y_{n},z_1,\ldots,z_n\}$
  with $K(1) = K(0) = \{n\}$, and binary constraints on each pair
  $\{x_i, y_i\}$ enforcing equality. A possible subproblem
  decomposition for an instance from this family would be $\{P, Q\}$,
  where $P$ contains $A$ as well as the binary constraints, and $Q$
  contains the constraint $B$. This family is depicted in
  \cref{fig:subproblem-decomp-simple}, with $P$ containing the
  constraints marked by solid lines, and $Q$ the constraint marked by
  a dashed line.
\end{example}

\begin{figure}[h]
  \centering
  \includegraphics[width=0.8\textwidth]{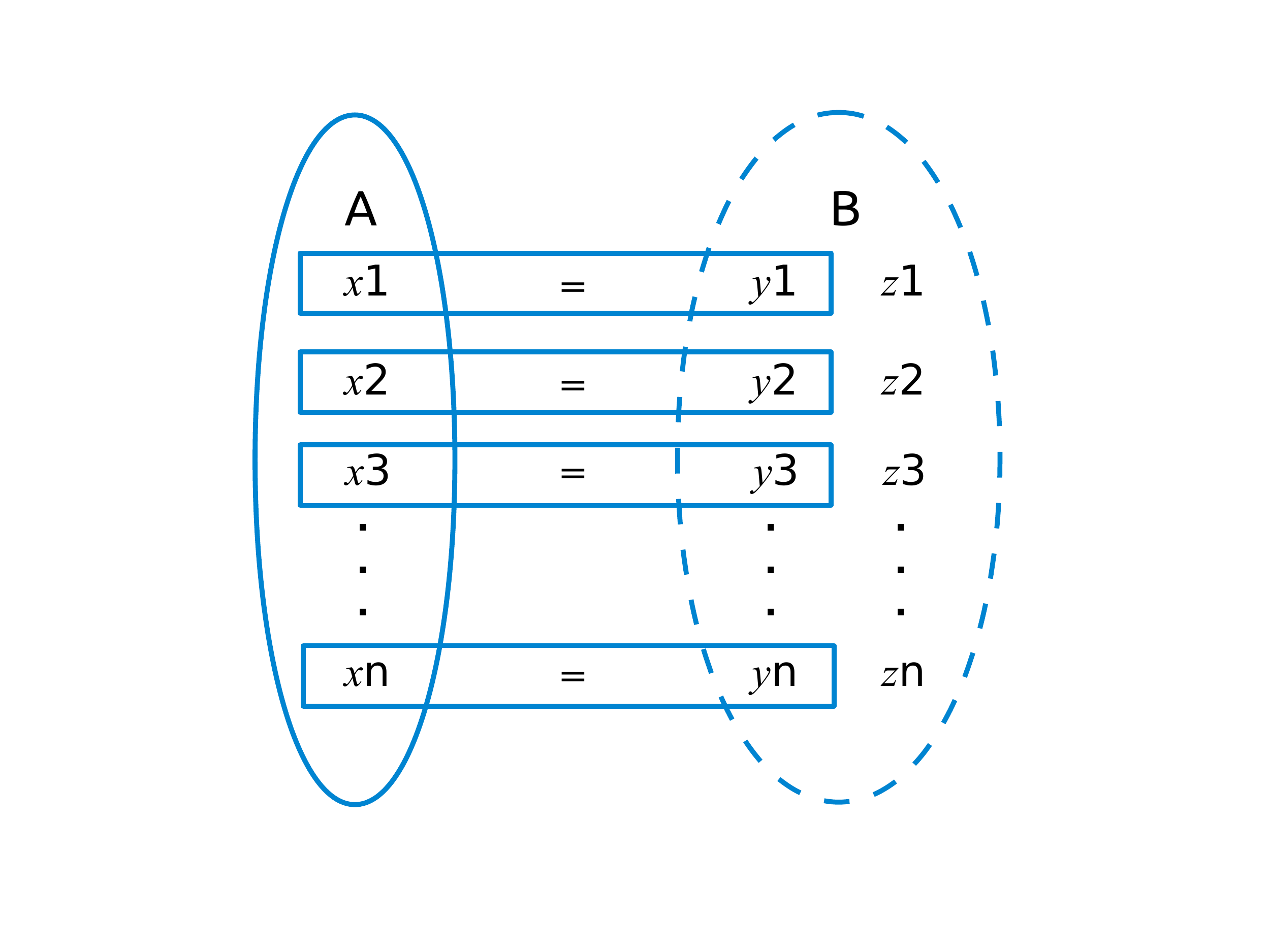}
  \caption{Family of instances from
    \cref{example:subproblem-decompositions-simple} with decomposition
    $\{P, Q\}$. Subproblem $P$ marked with solid lines and $Q$ with a
    dashed line.}
  \label{fig:subproblem-decomp-simple}
\end{figure}

Viewing subproblems as constraints and a subproblem decomposition $S$
as a CSP instance $\tup{\vars(\bigsqcup S), S}$, we have
$\sol(\tup{\vars(\bigsqcup S), S}) = \sol(\bigsqcup S)$, since every
constraint is in some subproblem. As such, we will treat $S$ as a CSP
instance when it is convenient to simplify notation.

Using \cref{def:subproblem-decomp}, we can treat any set of
CSP instances $S$ as a subproblem decomposition of the instance
$\bigsqcup S$. With that in mind, whenever we say that $S$ is a
subproblem decomposition without specifying what it is a decomposition
of, we mean that $S$ is a decomposition of the CSP instance $\bigsqcup
S$.

\begin{definition}[CSP instances given by subproblem decompositions]
  \label{def:class-given-by-subproblems}
  Let $\mathcal F$ be a family of subproblem decompositions. We define
  $\CSP(\mathcal F)$ to be the class of CSP instances $\{ \bigsqcup S
  \mid S \in \mathcal F \}$.
\end{definition}

\begin{definition}[Hypergraph of a subproblem decomposition]
  \label{def:hyp-of-subproblem-decomp}
  Let $S$ be a subproblem decomposition. The hypergraph of $S$,
  denoted $\hyp(S)$, has vertex set $\vars(\bigsqcup S)$ and set of
  hyperedges $\{ \vars(P) \mid P \in S \}$.

  For a family $\mathcal F$ of subproblem decompositions, let
  $\hyp(\mathcal F) = \{\hyp(S) \mid S \in \mathcal F\}$.
\end{definition}


Since a CSP instance can be seen as a global constraint,
\cref{def:part-assignment-checking} (partial assignment checking) and
\cref{def:sparse-intersections} (sparse intersections) carry over
unchanged. To apply them to a family of subproblem decompositions
$\mathcal F$, we need only consider the \languageword{} $\bigcup
\mathcal F$ in both cases.

One way of interpreting \cref{def:part-assignment-checking} for a
\languageword{} of CSP instances is that every instance has been drawn
from a tractable class --- not necessarily the same one, as long as
these classes all allow us to check in polynomial time whether a
partial assignment extends to a solution. Most known tractable classes
of CSP instances have this property; in particular, all the classes
discussed in \cref{sect:structural-restrictions} have it. To see this,
note that a partial assignment can be seen as a set of constraints on
one variable each, and adding such hyperedges to a hypergraph does not
change its tree, hypertree, or submodular width. On the other hand,
tractable classes defined by restricting the allowed assignments of a
constraint, rather than the hypergraph, are usually preserved by
adding a constraint with only one assignment
\cite{Cohen-comp-lang-handbook}.

To illustrate how these definitions apply to subproblem
decompositions, consider the following example.

\begin{example}
  \label{example:sparse-intersections}
  Recall the family of subproblem decompositions in
  \cref{example:subproblem-decompositions-simple}. For a decomposition
  $S = \{P, Q\}$ from this family, the set of intersection vertices
  for both subproblems is $\{y_1,\ldots,y_n\}$. Furthermore, the EGC
  constraint $A$ requires that there are exactly $4$ variables
  assigned $1$ among $\{x_1,\ldots,x_n\}$, so there are $\binom{n}{4}$
  satisfying assignments for this constraint. The equality constraints
  ensure that this is the number of solutions to the whole subproblem
  $P$, so for every $X \subseteq \{y_1,\ldots,y_n\}$ we have that
  $|\sol(\Pj_X(S))| \leq \binom{n}{4}$. Therefore, this family of
  subproblem decompositions has sparse intersections.
\end{example}

We can now derive a straightforward generalization of
\cref{thm:decomp-to-classic}.

\begin{theorem}
  \label{thm:subp-decomp-to-classic}
  Let $\mathcal F$ be a family of subproblem decompositions that
  allows partial assignment checking. If $\mathcal F$ has sparse
  intersections, then we can in polynomial time reduce any subproblem
  decomposition $S \in \mathcal F$ to a classic CSP instance $P$ with
  $\hyp(P) = \hyp(S)$, such that $P$ has a solution if and only if $S$
  does.
\end{theorem}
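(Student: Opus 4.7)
The plan is to replay the proof of \cref{thm:decomp-to-classic}, but with each subproblem $Q \in S$ playing the role that a single global constraint played there. The key observation justifying this move is that, by the remark following \cref{def:glob-const}, a CSP instance $Q = \tup{V_Q, C_Q}$ may itself be viewed as a global constraint whose variables are $V_Q$ and whose satisfying assignments are precisely $\sol(Q)$; this is exactly how $\mathcal F$ is treated in the hypotheses ``allows partial assignment checking'' and ``has sparse intersections''.

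First, for each $Q \in S$ I would compute the set of intersection variables $\iv(Q) = \bigcup\{\vars(Q) \cap \vars(Q') \mid Q' \in S - \{Q\}\}$, using the obvious adaptation of \cref{def:intersection-vertices} to subproblems. Next, for each $Q \in S$ I would invoke \cref{alg:enum-solutions} on the instance $\Pj_{\iv(Q)}(S)$ (treating $S$ as the CSP instance $\tup{\vars(\bigsqcup S), S}$) to produce $\sol(\Pj_{\iv(Q)}(S))$. Because $\mathcal F$ allows partial assignment checking, every projection used inside \cref{alg:enum-solutions} can be built in polynomial time and every candidate assignment can be checked in polynomial time; because $\mathcal F$ has sparse intersections, the quantity $s(\Pj_{\iv(Q)}(S))$ appearing in the running-time bound of \cref{alg:enum-solutions} is polynomial in $|\bigsqcup S|$. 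Hence all these enumerations terminate in polynomial time.

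Third, I would define the classic CSP instance $P$ by taking, for each $Q \in S$, the table constraint $\ic(Q)$ in the sense of \cref{def:ind-const}: its scope is $\vars(Q)$, and it lists each $\theta \oplus \mu^*$ where $\theta \in \sol(\Pj_{\iv(Q)}(S))$ and $\mu^*$ maps $\vars(Q) - \iv(Q)$ to a fresh symbol $*$. Then $P = \tup{\vars(\bigsqcup S), \{\ic(Q) \mid Q \in S\}}$, and by construction the scope of $\ic(Q)$ equals $\vars(Q)$, so $\hyp(P) = \hyp(S)$ by \cref{def:hyp-of-subproblem-decomp}.

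It remains to establish the solution-equivalence. The forward direction is a direct application of the subproblem analogue of \cref{lemma:solution-projection}: any $\theta \in \sol(\bigsqcup S)$ satisfies $\theta|_{\iv(Q)} \in \sol(\Pj_{\iv(Q)}(S))$ for every $Q$, and sending all non-intersection variables to $*$ yields a solution to $P$. The converse, which I expect to be the main technical point, is where the definition of $\iv$ pays off: given $\mu \in \sol(P)$, for each $Q$ there is by \cref{def:constraint-projection} some assignment $\mu^Q$ of $\vars(Q)$ with $\mu^Q|_{\iv(Q)} = \mu|_{\iv(Q)}$ and $\mu^Q \in \sol(Q)$. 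Since any variable outside $\iv(Q)$ belongs to no other subproblem, the $\mu^Q$ agree on all shared variables (they agree on $\iv(Q)$ through $\mu$, and have disjoint ``private'' parts), so they glue into a single well-defined assignment of $\vars(\bigsqcup S)$ that simultaneously satisfies every $Q \in S$ and hence solves $\bigsqcup S$. The only subtlety to check is that an intersection variable $v$ shared by three or more subproblems is handled correctly; but since every $\mu^Q$ restricted to $v$ equals $\mu(v)$, consistency is automatic. \qed
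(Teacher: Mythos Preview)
Your proposal is correct and follows exactly the same approach as the paper: view each subproblem as a global constraint and then invoke \cref{thm:decomp-to-classic}. The paper's own proof is in fact a single sentence to this effect, whereas you have spelled out the details of how the invocation goes through; both are fine.
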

\begin{proof}
  As subproblems can be seen as global constraints, the proof follows
  directly from \cref{thm:decomp-to-classic}. \qed
\end{proof}

\begin{corollary}
  \label{corollary:subproblem-csp}
  Let $\mathcal F$ be a family of subproblem decompositions that
  allows partial assignment checking and has sparse intersections. If
  $\CSP(\hyp(\mathcal F), \Extlang)$ is trac\-ta\-ble or in \FPT{},
  then so is $\CSP(\mathcal F)$.
\end{corollary}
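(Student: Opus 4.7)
The plan is to mirror the proof of \cref{corollary:fewsol-csp} and reduce the statement directly to \cref{thm:subp-decomp-to-classic}. Given an arbitrary instance in $\CSP(\mathcal F)$, by \cref{def:class-given-by-subproblems} it has the form $\bigsqcup S$ for some subproblem decomposition $S \in \mathcal F$. Since $\mathcal F$ allows partial assignment checking and has sparse intersections, \cref{thm:subp-decomp-to-classic} provides a polynomial-time procedure that turns $S$ into a classic CSP instance $P$ such that $\hyp(P) = \hyp(S)$ and $P$ has a solution if and only if $S$ does.

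Next I would observe that $P$ lies in $\CSP(\hyp(\mathcal F), \Extlang)$: it is a classic instance (all constraints are table constraints), and its hypergraph $\hyp(P) = \hyp(S)$ belongs to $\hyp(\mathcal F)$ by \cref{def:hyp-of-subproblem-decomp}. Moreover, observing that $\sol(S) = \sol(\bigsqcup S)$ (as noted in the text just after \cref{def:subproblem-decomp}), the satisfiability of $P$ is equivalent to the satisfiability of the original $\CSP(\mathcal F)$ instance $\bigsqcup S$.

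From this equivalence, tractability of $\CSP(\hyp(\mathcal F), \Extlang)$ immediately yields tractability of $\CSP(\mathcal F)$: decide $\bigsqcup S$ by constructing $P$ in polynomial time and then running the polynomial-time decision procedure on $P$. For the FPT case, the parameter is the number of variables of the instance. Since $\hyp(P) = \hyp(S)$ shares its vertex set with $\bigsqcup S$, the parameter value is preserved by the reduction, and the polynomial-time reduction composes with the FPT algorithm for $\CSP(\hyp(\mathcal F), \Extlang)$ to give an FPT algorithm for $\CSP(\mathcal F)$.

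The step requiring the most care is verifying that the FPT parameterization is preserved, since an FPT reduction must not blow up the parameter. The vertex-preserving property of the reduction produced by \cref{thm:subp-decomp-to-classic} (which in turn comes from the vertex-preserving property in \cref{thm:decomp-to-classic}) is what makes this step go through; everything else is essentially bookkeeping using the definitions already in place.
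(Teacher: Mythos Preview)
Your proposal is correct and follows essentially the same approach as the paper's proof: apply \cref{thm:subp-decomp-to-classic} to obtain a classic instance $P \in \CSP(\hyp(\mathcal F), \Extlang)$ with the same satisfiability, and conclude tractability or \FPT{} by composition. Your argument is in fact more detailed than the paper's, which does not explicitly address the parameter-preservation point needed for the \FPT{} case.
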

\begin{proof}
  Let $\mathcal F$ be given. By \cref{thm:subp-decomp-to-classic}, we
  can reduce any subproblem decomposition $S \in \mathcal F$ to an
  instance $P \in \CSP(\hyp(\mathcal F), \Extlang)$ in polynomial
  time. Since $P$ has a solution if and only if $S$ does, tractability
  of $\CSP(\hyp(\mathcal F), \Extlang)$ implies the same for
  $\CSP(\mathcal F)$. \qed
\end{proof}

To illustrate this result, recall
\cref{example:subproblem-decompositions-simple}. From
\cref{example:sparse-intersections}, we know that this family of
subproblem decompositions has sparse intersections. Furthermore, both
subproblem allow partial assignment checking, as the EGC constraints
both have interval cardinality sets \cite{Regin96:aaai-generalized},
and the equality constraints of subproblem $P$ can always be
satisfied. Therefore, \cref{corollary:subproblem-csp} applies to this
problem.

\subsection{Applying \cref{corollary:subproblem-csp}}
\label{sect:cohen-green}

We are now ready to discuss the result of Cohen and Green mentioned at
the beginning of \cref{sect:subproblem-decompositions}, and to show
how it can be derived as a special case of our result. First, we need
to define guarded decompositions.

\begin{definition}[Guarded decomposition]
  A guarded block of a hypergraph $G$ is a pair $\tup{\lambda, \chi}$
  where the guard $\lambda$ is a subset of the hyperedges of $G$, and
  the block, $\chi$, is a subset of $\bigcup \lambda$.

  For every classic CSP instance $P$ and every guarded block
  $\tup{\lambda, \chi}$ of $\hyp(P)$, we define the constraint
  generated by $P$ on $\tup{\lambda, \chi}$ to be the projection onto
  $\chi$ of the join of all the constraints of $P$ whose scopes are in
  $\lambda$.

  A set of of guarded blocks $\Theta$ of a hypergraph $G$ is a guarded
  decomposition of $G$ if for every $P \in \CSP(\{G\}, \Extlang)$, the
  CSP instance over the same variables as $P$ with constraints
  generated by the blocks in $\Theta$ has the same solutions as $P$. 

  A guarded decomposition is acyclic if the hypergraph having the
  union of the blocks $\chi$ as vertices, and each $\chi$ as a
  hyperedge, is acyclic.
\end{definition}

Cohen and Green then introduce a mapping $\mu$ from the constraints of
a CSP instance $P$ to nonempty sets of elements of a guarded
decomposition of $\hyp(P)$. They demand that
\begin{enumerate}
\item For each guarded block $\tup{\lambda, \chi}$ and hyperedge in
  $\lambda$, $\mu$ assigns at least one constraint with that scope to
  this guarded block,
\item that the set of guarded blocks $\mu$
assigns to a constraint $c$ contains the scope of $c$ in all the
guards, and finally
\item that at least one of the guarded blocks assigned
to $c$ contains the variables of the scope of $c$ in the block.
\end{enumerate}

Note that, taken together, the conditions above mean that the mapping
$\mu$ turns each guarded block of the decomposition into a subproblem,
and the whole decomposition into a subproblem decomposition, since
each guarded block is assigned a set of constraints, and each
constraint is assigned to a guarded block.

Furthermore, they introduce two more notions. A type is a
polynomial-time algorithm for solving a set of CSP instances. A typed
guarded decomposition is one where each guarded block $\beta$ is
assigned a type, and the CSP instance given by the set of constraints
assigned to $\beta$ is a member of the assigned type. This is almost
\cref{def:part-assignment-checking}, however, there is no provision
for solving a problem with some variables assigned.

Finally, a guarded decomposition $\Theta$ is $k$-separated if for
every guarded block $\tup{\lambda, \chi}$ there exists a set of
hyperedges $\epsilon$, with $|\epsilon| \leq k$, such that for each
guarded block $\tup{\lambda_2, \chi_2} \in \Theta - \{\lambda, \chi\}$
we have that $\chi \cap \chi_2 \subseteq \bigcup \epsilon$. Observe
that when $k$ is fixed, the intersection variables of each subproblem
are covered by a fixed number of table constraints, and hence that the
number of possible solutions is bounded by the size of the join of
these constraints. It follows that the intersections are sparse as per
\cref{def:sparse-intersections}.

They then proceed to show that for fixed $k$, a CSP instance with a
$k$-separated, acyclic typed guarded decomposition can be solved in
polynomial time, \emph{under the condition that the types can handle
  problems with some variables assigned specific values}.

The last condition is precisely what we need for partial assignment
checking. Therefore, since the decomposition is required to be
acyclic, their result satisfies the conditions of
\cref{corollary:subproblem-csp}. Note, however, that since there are
other ways to obtain sparse intersections,
\cref{corollary:subproblem-csp} is a more general result even for
classic CSP instances.

\section{Weighted CSP}

Having few solutions in key parts of a CSP instance has turned out to
be a property we can exploit to obtain tractability. In this section,
we are going to apply this property to an extension of the CSP
framework called weighted CSP instances
\cite{gottlob-etal-tractable-optimization,degivry06}, where every
constraint assigns a cost to every satisfying assignment, and we would
like to find a solution with smallest cost. This type of CSP is itself
a special case of the more general valued CSP framework
\cite{schiex95vcsp,standa-vcsp}, where every constraint is specified
by a function that assigns a cost to every possible assignment for the
variables of that constraint. The reason for considering weighted,
rather than valued, CSP, is that weighted (table) constraints list
every satisfying assignment along with the costs, while a valued
constraint is given by a function from assignments to values. The
representation of a valued constraint is thus much more compact, and
the notion of a satisfying assignment is no longer defined.

\begin{definition}[Weighted constraint]
  A \emph{weighted global constraint} $e[\delta]$ is a global
  constraint that assigns to each $\theta \in e[\delta]$ a value
  $\cost(e[\delta], \theta)$ from $\mathbb Q$.

  The \emph{size} of a weighted global constraint $e[\delta]$ is given
  by the sum of $|\delta|$ and the size of the bit representation for
  each cost.
\end{definition}

In other words, the number of bits needed to represent the costs of
all the satisfying assignments is part of a weighted constraint's
size.

\begin{definition}[WCSP instance]
  \label{def:wcsp}
  A \emph{WCSP instance} is a pair $P = \tup{V, C}$, where $V$ is a
  set of variables and $C$ a set of weighted constraints. An
  assignment is a solution to $P$ if it satisfies every constraint in
  $C$, and we denote the set of all solutions to $P$ by $\sol(P)$.

  For every solution $\theta$ to $P$ we define $\cost(P, \theta) =
  \displaystyle\sum_{e[\delta] \in C} \cost(e[\delta],
  \theta|_{\vars(\delta)})$. An assignment $\theta$ is an
  \emph{optimal} solution to $P$ if and only if it is a solution to
  $P$ \emph{with the smallest cost}, i.e.~$\cost(P, \theta) =
  \min(\{\cost(P, \theta') \mid \theta' \in \sol(P)\})$.
\end{definition}

As is commonly done with optimization problems in complexity theory,
below we consider the decision problem associated with WCSP instances.

\begin{definition}[WCSP decision problem]
  Given a WCSP instance $P$ and $k \in \mathbb Q$, the \emph{WCSP
    decision problem} is to decide whether $P$ has a solution $\theta$
  with $\cost(P, \theta) \leq k$.
\end{definition}

As for CSP instances, a classic WCSP instance is one where all
constraints are table global constraints. As an example of known
tractability results for classic WCSP instances, consider the theorem
below.

\begin{theorem}[\cite{gottlob-etal-tractable-optimization}]
  Let $\mathcal H$ be a class of hypergraphs. If $\hw(\mathcal H) <
  \infty$, then a class of classic WCSP instances whose hypergraphs
  are in $\mathcal H$ is tractable.
\end{theorem}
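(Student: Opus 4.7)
The plan is to adapt the standard bottom-up dynamic programming algorithm for classic CSP instances of bounded generalized hypertree width to the weighted setting. Fix $k = \ghw(\mathcal H)$, and let $P = \tup{V, C}$ be a classic WCSP instance with $\hyp(P) \in \mathcal H$, together with a threshold $k' \in \mathbb Q$. First I would obtain a tree decomposition $\tup{T, \chi}$ of $\hyp(P)$ together with an assignment $\lambda$ to each node of a set of at most $k$ hyperedges of $\hyp(P)$ covering $\chi(t)$. Since $\ghw$ itself is \NP-hard to compute, in practice one would invoke a known polynomial-time approximation (hypertree width, which satisfies $\hw(G) \leq O(k)$), but any decomposition of width bounded by a function of $k$ suffices for our complexity claim. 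Root $T$ at an arbitrary node and, using the third property of tree decompositions, associate each constraint $e[\delta] \in C$ with exactly one node $t_\delta$ such that $\vars(\delta) \subseteq \chi(t_\delta)$.

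Next, for every node $t$ of $T$, I would enumerate the \emph{candidate} assignments of $\chi(t)$: those assignments $\mu$ of $\chi(t)$ whose projection onto the scope of every constraint in $\lambda(t)$ is satisfying. Because $\chi(t) \subseteq \bigcup_{e[\delta] \in \lambda(t)} \vars(\delta)$ and there are at most $k$ such constraints, any candidate $\mu$ is completely determined by the tuple of projections onto these scopes, so there are at most $|P|^k$ candidates, and the join can be formed in polynomial time (this is essentially the content of \cref{lemma:fhw-few-solutions} applied to the bag, and is the point where boundedness of $\ghw$ enters).

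Now the dynamic program: for each node $t$ and candidate assignment $\mu$ of $\chi(t)$, define $f_t(\mu)$ to be the minimum, taken over all extensions of $\mu$ to a solution on the union of bags in the subtree rooted at $t$, of the sum $\sum_{e[\delta]\; \text{associated with a node in the subtree}} \cost(e[\delta], \mu_{\text{ext}}|_{\vars(\delta)})$, or $+\infty$ if no such extension exists. Working bottom up, at a leaf $t$ one sets $f_t(\mu) = \sum_{e[\delta] : t_\delta = t} \cost(e[\delta], \mu|_{\vars(\delta)})$, and at an internal node with children $t_1,\ldots,t_m$ one sets
\[
f_t(\mu) = \sum_{e[\delta] : t_\delta = t} \cost(e[\delta], \mu|_{\vars(\delta)}) + \sum_{i=1}^{m} \min\{ f_{t_i}(\nu) \mid \nu|_{\chi(t)\cap\chi(t_i)} = \mu|_{\chi(t)\cap\chi(t_i)}\}.
\]
The running-intersection property guarantees that once $\mu$ fixes the shared variables with each child, no constraint is ever counted twice and no coordination across children is required beyond matching on intersections. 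The instance has a solution of cost at most $k'$ if and only if $\min_\mu f_{\text{root}}(\mu) \leq k'$.

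Finally, I would verify the running time: there are $|T| = O(|V|)$ nodes, at most $|P|^k$ candidates per node, and each entry is computed in time polynomial in $|P|^k$ by combining with the at most $|P|^k$ candidates of each child via a hash-based grouping on the intersection. Hence the total running time is polynomial in $|P|$ whenever $k$ is bounded. The main obstacle in writing this proof carefully is not the dynamic programming itself, which is entirely routine given \cref{thm:width-tract}, but the bookkeeping that ensures (i) each constraint in $C$ contributes its cost exactly once to $f_{\text{root}}$ and (ii) the approximation gap between $\hw$ and $\ghw$ (and the fact that $\ghw$ is hard to compute exactly) is handled, for example by appealing to known polynomial-time algorithms that produce decompositions of width bounded by a function of $\ghw(\mathcal H)$.
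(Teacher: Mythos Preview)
The paper does not prove this theorem at all: it is stated as a known result and attributed to Gottlob et al.\ via the citation, with no argument given in the present paper. There is therefore nothing to compare your proposal against here.

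For what it is worth, your sketch is essentially the standard bottom-up dynamic program on a (generalized) hypertree decomposition that underlies the cited result, and it is correct in outline. Two small points of bookkeeping you would want to tighten if you were to write it out fully: (i) your ``candidate'' assignments at a node $t$ are defined only as those consistent with the constraints in the cover $\lambda(t)$, but a constraint $e[\delta]$ with $\vars(\delta)\subseteq\chi(t)$ and $e[\delta]\notin\lambda(t)$ must still be checked (otherwise $\cost(e[\delta],\mu|_{\vars(\delta)})$ is undefined); this is easily fixed by setting $f_t(\mu)=+\infty$ whenever $\mu$ falsifies any constraint associated with $t$; and (ii) the remark about approximating $\ghw$ is the right way to handle the decomposition step, since the paper's $\hw$ is exactly generalized hypertree width and one appeals to the fact that ordinary hypertree width (which is polynomially computable for fixed width) is within a constant factor of it.
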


Since we are free to ignore the costs a weighted constraint puts on
assignments and treat it as an ``ordinary'' constraint, definitions of
subproblems and subproblem decompositions carry over unchanged. Note
that since the WCSP decision problem is clearly in \NP, we can view a
WCSP instance as a \emph{weighted} global constraint. Therefore,
\cref{def:part-assignment-checking} will now be subtly different.

\begin{definition}[Weighted part. assignment checking]
  \label{def:weighted-part-assignment-checking}
  A weighted con\-straint \languageword{} $\Gamma$ allows
  \emph{partial assignment checking} if for any weighted constraint
  $e[\delta] \in \Gamma$ we can decide in polynomial time, given an
  assignment $\theta$ to a set of variables $W \subseteq
  \vars(\delta)$ and $k \in \mathbb Q$, whether $\theta$ is contained
  in an assignment that satisfies $e[\delta]$ and has cost at most
  $k$, i.e.~whether there exists $\mu \in e[\delta]$ such that $\theta
  = \mu|_W$ and $\cost(e[\delta], \mu) \leq k$.
\end{definition}

In other words, given a partial assignment we need to be able to solve
the WCSP decision problem for our constraint in polynomial time. Note
also that doing so allows us to find the minimum cost among the
assignments that contain our partial assignment by binary search. This
will be needed in order to construct projections of a weighted global
constraint. To define the projection of a weighted constraint, we need
to alter \cref{def:constraint-projection} to take costs into account.

\begin{definition}[Weighted constraint projection]
  \label{def:weighted-constraint-projection}
  Let $e[\delta]$ be a weighted constraint. The \emph{projection} of
  $e[\delta]$ onto a set of variables $X \subseteq \vars(\delta)$ is
  the constraint $\Pj_X(e[\delta])$ such that $\mu \in
  \Pj_X(e[\delta])$ if and only if there exists $\theta \in e[\delta]$
  with $\theta|_X = \mu$. The cost of an assignment $\theta \in
  \Pj_X(e[\delta])$ is $\cost(\Pj_X(e[\delta]), \theta) =
  \min(\{\cost(e[\delta], \mu) \mid \mu \in e[\delta] \mbox{ and }
  \mu|_X = \theta\})$.

  For a WCSP instance $P = \tup{V, C}$ and $X \subseteq V$ we define
  $\Pj_X(P) = \tup{X, C'}$, where $C'$ is the least set containing for
  every $e[\delta] \in C$ such that $X \cap \vars(\delta) \not=
  \emptyset$ the constraint $\Pj_{X \cap \vars(\delta)}(e[\delta])$.
\end{definition}

\begin{definition}[Weighted table constraint induced by a subproblem]
  \label{def:weighted-ind-const}
  Let $S$ be a subproblem decomposition. For every $T \in S$, let
  $\mu^*$ be the assignment to $\vars(T) - \iv(T)$ that assigns a
  special value $*$ to every variable. The \emph{weighted table
    constraint induced by $T$} is $\ic(T) = e[\delta]$, where
  $\vars(\delta) = \vars(T)$, and $\delta$ contains for every
  assignment $\theta \in \sol(\Pj_{\iv(T)}(S))$ the assignment $\theta
  \oplus \mu^*$ with $\cost(\ic(T), \theta \oplus \mu^*) =
  \cost(\Pj_{\iv(T)}(T),\theta)$.
\end{definition}

Since the variables of a subproblem $T \in S$ not in $\iv(T)$ occur
only in $T$ itself, if we have a solution to $\Pj_{\iv(T)}(S)$, it
doesn't matter what solution to $T$ we extend it to. We should
therefore pick the one that has the smallest cost, and that cost is
precisely $\cost(\Pj_{\iv(T)}(T),\theta)$ by
\cref{def:weighted-constraint-projection}. The same as for CSP
instances, if every subproblem in a weighted decomposition $S$ allows
weighted partial assignment checking, building $\ic(T)$ for any $T \in
S$ can be done in polynomial time when $|\sol(\Pj_{\iv(T)}(S))|$ is
polynomial in the size of $\bigsqcup S$ for every subset of $\iv(T)$,
again by using \cref{alg:enum-solutions}. Since the definition of
sparse intersections (\cref{def:sparse-intersections}) carries over
unchanged, we are ready to prove the following analogue of
\cref{thm:decomp-to-classic} for weighted subproblem decompositions.

\begin{theorem}
  \label{thm:weighted-decomp-to-classic}
  Let $\mathcal F$ be a family of weighted subproblem decompositions
  that allows partial assignment checking. If $\mathcal F$ has sparse
  intersections, then we can in polynomial time reduce any weighted
  subproblem decomposition $S \in \mathcal F$ to a classic weighted
  CSP instance $P$ with $\hyp(P) = \hyp(S)$, such that $P$ has a
  solution with cost at most $k \in \mathbb Q$ if and only if $S$
  does.
\end{theorem}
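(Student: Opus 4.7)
The plan is to mirror the proofs of \cref{thm:decomp-to-classic} and \cref{thm:subp-decomp-to-classic}, now keeping track of costs. Concretely, I would construct $P$ as the classic weighted CSP instance whose constraint set is $\{\ic(T) \mid T \in S\}$, using the weighted induced table constraint from \cref{def:weighted-ind-const}. By construction each $\ic(T)$ has scope $\vars(T)$, so $\hyp(P) = \hyp(S)$ is immediate.

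For the polynomial-time construction step, I would first enumerate $\sol(\Pj_{\iv(T)}(S))$ for each $T \in S$ using \cref{alg:enum-solutions}: sparse intersections bounds the number of partial solutions explored by this algorithm, and partial assignment checking (now in the weighted sense) supplies the polynomial-time solution test that \cref{alg:enum-solutions} requires, with costs ignored at this stage. Then, for each enumerated $\theta \in \sol(\Pj_{\iv(T)}(S))$, I would compute $\cost(\Pj_{\iv(T)}(T), \theta)$ by binary search: weighted partial assignment checking decides, for any rational $k'$, whether $\theta$ extends to a satisfying assignment of $T$ with cost at most $k'$, so binary search over the polynomially many bits used to represent costs in $T$ pins down the minimum in polynomial time.

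For the equivalence, I would argue both directions. Suppose $\theta$ is a solution to $\bigsqcup S$ with $\cost(\bigsqcup S, \theta) \leq k$. Define $\mu$ on $\vars(P)$ by setting $\mu(v) = \theta(v)$ on the intersection variables and $\mu(v) = *$ elsewhere. Every $\theta|_{\iv(T)}$ lies in $\sol(\Pj_{\iv(T)}(S))$ by \cref{lemma:solution-projection}, so by \cref{def:weighted-ind-const} the assignment $\mu|_{\vars(T)}$ satisfies $\ic(T)$ with $\cost(\ic(T), \mu|_{\vars(T)}) = \cost(\Pj_{\iv(T)}(T), \theta|_{\iv(T)}) \leq \cost(T, \theta|_{\vars(T)})$, and summing over $T$ gives $\cost(P, \mu) \leq \cost(\bigsqcup S, \theta) \leq k$. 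Conversely, given $\mu \in \sol(P)$ with $\cost(P, \mu) \leq k$, for each $T$ the assignment $\mu|_{\iv(T)}$ lies in $\sol(\Pj_{\iv(T)}(S))$, and \cref{def:weighted-constraint-projection} furnishes an extension $\mu^T \in \sol(T)$ of $\mu|_{\iv(T)}$ with $\cost(T, \mu^T) = \cost(\ic(T), \mu|_{\vars(T)})$. Since the variables of $\vars(T) - \iv(T)$ occur only in $T$, these individual witnesses can be glued into a single solution $\theta$ of $\bigsqcup S$, and the per-subproblem cost equalities sum to $\cost(\bigsqcup S, \theta) = \cost(P, \mu) \leq k$.

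The main obstacle is the cost bookkeeping, which did not arise in \cref{thm:decomp-to-classic}. Specifically, I need to verify that extracting the minimum-cost extension of a partial assignment really is polynomial (the binary search works only because the bit-representations of costs stay polynomial in $|\bigsqcup S|$), and that when minimum-cost completions of each $\mu|_{\iv(T)}$ are chosen independently in each subproblem, the resulting global assignment is still a solution of $\bigsqcup S$ whose total cost equals the sum of the per-subproblem projected costs. The latter step rests squarely on the observation that the non-intersection variables of a subproblem are local to that subproblem, so the independent extensions do not conflict and no double counting occurs.
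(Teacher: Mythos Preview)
Your proposal is correct and follows essentially the same approach as the paper: build $P$ from the weighted induced table constraints $\ic(T)$ via \cref{def:weighted-ind-const}, use \cref{alg:enum-solutions} together with sparse intersections and weighted partial assignment checking (with binary search for the minimum cost) to compute them in polynomial time, and prove the cost equivalence in both directions by exactly the per-subproblem inequalities and the gluing argument you describe. Your extra remarks on why binary search terminates and why the independent completions do not conflict or double-count are just explicit elaborations of steps the paper leaves implicit.
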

\begin{proof}
  Let $S$ be a subproblem decomposition from $\mathcal F$. For each $T
  \in S$, $P$ will contain the table constraint $\ic(T)$ from
  \cref{def:ind-const}. Since $\mathcal F$ allows partial assignment
  checking and has sparse intersections, computing $\ic(T)$ can be
  done in polynomial time by invoking \cref{alg:enum-solutions} on
  $\Pj_{\iv(T)}(S)$.

  It is clear that $\hyp(P) = \hyp(S)$. All that is left to show is
  that $P$ has a solution with cost at most $k \in \mathbb N$ if and
  only if $S$ does. Let $\theta$ be a solution to $S$. For every $T
  \in S$, $\theta|_{\iv(T)} \in \Pj_{\iv(T)}(S)$ by
  \cref{def:weighted-constraint-projection,def:intersection-vertices},
  so the assignment $\mu$ that assigns the value $\theta(v)$ to each
  $v \in \displaystyle\bigcup_{T \in S}\iv(T)$, and $*$ to every other
  variable is a solution to $P$. Furthermore, for every $T \in S$ we
  have by \cref{def:weighted-ind-const} that $\cost(\ic(T),
  \mu|_{\vars(T)}) = \cost(\Pj_{\iv(T)}(T), \mu|_{\iv(T)})$, so by
  \cref{def:weighted-constraint-projection} $\cost(\ic(T),
  \mu|_{\vars(T)}) \leq \cost(T, \theta|_{\vars(T)})$ and therefore
  $\cost(P, \mu) \leq \cost(S, \theta)$.

  In the other direction, if $\theta$ is a solution to $P$, then
  $\theta$ satisfies $\ic(T)$ for every $T \in S$. By
  \cref{def:weighted-ind-const}, this means that $\theta|_{\iv(T)} \in
  \sol(\Pj_{\iv(T)}(S))$, and by
  \cref{def:weighted-constraint-projection}, there exists an
  assignment $\mu^T$ with $\mu^T|_{\iv(T)} = \theta|_{\iv(T)}$ that
  satisfies $T$, such that $\cost(\ic(T), \theta|_{\vars(T)}) =
  \cost(T, \mu^T)$. By \cref{def:intersection-vertices}, the variables
  not in $\iv(T)$ do not occur in any other subproblem from $S$, so we
  can combine all the assignments $\mu^T$ to form a solution $\mu$ to
  $S$ such that for $T \in S$ and $v \in \vars(T)$ we have $\mu(v) =
  \mu^T(v)$, with $\cost(P, \theta) = \cost(S, \mu)$. \qed
\end{proof}

As before, for a family of weighted subproblem decompositions
$\mathcal F$ we define $\WCSP(\mathcal F) = \{\bigsqcup S \mid S \in
\mathcal F\}$, and for a class of hypergraphs $\mathcal H$ we let
$\WCSP(\mathcal H, \Extlang)$ be the class of classic WCSP instances
whose hypergraphs are in $\mathcal H$. With that in mind, we can use
\cref{thm:weighted-decomp-to-classic} to obtain new tractable and
fixed-parameter tractable classes of weighted CSP instances with
global constraints.

\begin{corollary}
  \label{corollary:weighted-subproblem-csp}
  Let $\mathcal F$ be a family of weighted subproblem decompositions
  that allows partial assignment checking and has sparse
  intersections. If $\WCSP(\hyp(\mathcal F), \Extlang)$ is tractable
  or in \FPT{}, then so is $\WCSP(\mathcal F)$.
\end{corollary}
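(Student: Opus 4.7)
The plan is to mirror the proof of \cref{corollary:subproblem-csp}, using \cref{thm:weighted-decomp-to-classic} in place of \cref{thm:subp-decomp-to-classic}. Given an arbitrary instance $\bigsqcup S \in \WCSP(\mathcal F)$ arising from a weighted subproblem decomposition $S \in \mathcal F$, the first step is to invoke \cref{thm:weighted-decomp-to-classic} on $S$, which runs in polynomial time since $\mathcal F$ allows partial assignment checking and has sparse intersections. This yields a classic weighted CSP instance $P$ with $\hyp(P) = \hyp(S)$, hence $P \in \WCSP(\hyp(\mathcal F), \Extlang)$, and with the property that $P$ has a solution of cost at most $k$ if and only if $S$ does, for every $k \in \mathbb Q$.

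The second step is to translate a decision query on $\bigsqcup S$ into a decision query on $P$. Since $\sol(\bigsqcup S) = \sol(S)$ by construction (every constraint of $\bigsqcup S$ appears in some subproblem of $S$), the WCSP decision problem ``does $\bigsqcup S$ have a solution of cost at most $k$?'' reduces, in polynomial time, to the same question on $P$. By the assumed tractability (or fixed-parameter tractability) of $\WCSP(\hyp(\mathcal F), \Extlang)$, this latter question can be answered in polynomial time (respectively, in time $O(f(|V(P)|) \cdot |P|^c)$ for a suitable computable $f$ and constant $c$).

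The only step that requires a moment's care is the \FPT{} case, since the parameter is the number of variables in the hypergraph. Because the reduction of \cref{thm:weighted-decomp-to-classic} preserves the hypergraph ($\hyp(P) = \hyp(S)$), the number of variables is preserved exactly, so the parameter of the reduced instance equals the parameter of the original. Combined with the fact that $|P|$ is polynomial in $|\bigsqcup S|$ (the reduction runs in polynomial time, so $P$ has polynomial size), we obtain an \FPT{} algorithm for $\WCSP(\mathcal F)$ with the same function $f$ and a possibly larger polynomial exponent.

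The main obstacle, if any, is to confirm that \cref{thm:weighted-decomp-to-classic} indeed yields a polynomial-time reduction whose output lies in $\WCSP(\hyp(\mathcal F),\Extlang)$ with preserved variable set; this is already handled by \cref{def:hyp-of-subproblem-decomp} together with the construction via $\ic(T)$ in \cref{def:weighted-ind-const}. Nothing further is needed beyond chaining the reduction with the assumed algorithm.
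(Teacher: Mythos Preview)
Your proposal is correct and follows exactly the same approach as the paper: invoke \cref{thm:weighted-decomp-to-classic} to get a polynomial-time reduction to a classic weighted instance with the same hypergraph, and then apply the assumed tractability or \FPT{} result for $\WCSP(\hyp(\mathcal F), \Extlang)$. Your additional remarks about parameter preservation in the \FPT{} case are more explicit than the paper's proof, but the argument is otherwise identical.
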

\begin{proof}
  Let $\mathcal F$ be given. By \cref{thm:weighted-decomp-to-classic},
  we can reduce any weighted subproblem decomposition $S \in \mathcal
  F$ to an instance $P \in \WCSP(\hyp(\mathcal F), \Extlang)$ in
  polynomial time. Since $P$ has a solution with cost $k$ if and only
  if $S$ does, tractability of $\WCSP(\hyp(\mathcal F), \Extlang)$
  implies the same for $\WCSP(\mathcal F)$. \qed
\end{proof}

\section{Summary}

We have studied the tractability of CSPs with global constraints under
various structural restrictions such as tree and hypertree width. By
exploiting the number of solutions to CSP instances in key places, we
have identified new tractable classes of such problems, both in the
ordinary and weighted case.

Furthermore, we have shown how this technique can be used to combine
CSP instances drawn from known tractable classes, extending a previous
result by Cohen and Green \cite{guarded-decomp-CG}. We have also shown
how the existence of back doors in CSP instances can be used to
augment our results.

More work remains to be done on this topic. In particular,
investigating whether a refinement of the conditions we have
identified can be used to show dichotomy theorems, similar to those
known for certain kinds of constraints and structural restrictions
\cite{ChenGroheCompactRel,grohe-hom-csp-complexity,DBLP:journals/toc/Marx10}. Also
of interest is the complexity of checking whether a constraint has few
solutions, which ties into finding classes of CSP instances that
satisfy \cref{def:sparse-intersections}.

\begin{acknowledgements}
  This work has been supported by the Research Council of Norway
  through the project DOIL (RCN project \#213115). The author thanks
  the anonymous reviewers for their detailed feedback.
\end{acknowledgements}

\bibliographystyle{spmpsci}
\bibliography{mainrefs-thesis}

\end{document}